  \providecommand\BibTeX{{%
    \normalfont B\kern-0.5em{\scshape i\kern-0.25em b}\kern-0.8em\TeX}}}
\renewcommand\footnotetextcopyrightpermission[1]{} % removes footnote with conference information in first column
\renewcommand{\phi}{\varphi}
\newtheorem{claim}{Claim}
\renewcommand{\phi}{\varphi}
\newcommand{\K}{{\sf K}}
\newcommand{\cK}{{\sf \overline{K}}}
\newcommand{\B}{{\sf B}}
\newenvironment{proof-of-claim}{\noindent{\em Proof of Claim.}}{\hfill $\boxtimes\hspace{2mm}$\linebreak}
\title{Duty to Warn in Strategic Games}
\author{Pavel Naumov}
\affiliation{%
 \institution{Tulane University}
 %\streetaddress{P.O. Box 1212}
 \city{New Orleans} 
 \state{Louisiana} 
 %\postcode{43017-6221}
}
\email{pgn2@cornell.edu}
\author{Jia Tao}
\affiliation{%
 \institution{Lafayette College}
% \streetaddress{P.O. Box 1212}
 \city{Easton} 
 \state{Pennsylvania} 
% \postcode{18042}
}
\email{taoj@lafayette.edu}
\begin{document}

\begin{abstract}
The paper investigates the second-order blameworthiness or duty to warn modality ``one coalition knew how another coalition could have prevented an outcome''. The main technical result is a sound and complete logical system that describes the interplay between the distributed knowledge and the duty to warn modalities.
\end{abstract}

\keywords{epistemic logic; blameworthiness; responsibility completeness}  

\maketitle

\section{Introduction}

On October 27, 1969, Prosenjit Poddar, an Indian graduate student from the University of California, Berkeley, came to the parents' house of Tatiana Tarasoff, an undergraduate student who recently immigrated from Russia. After a brief conversation, he pulled out a gun and unloaded it into her torso, then stabbed her eight times with a 13-inch butcher knife, walked into the house and called the police. Tarasoff was pronounced dead on arrival at the hospital~\cite{a17timeline}. 

In this paper we study the notion of blameworthiness. This notion is usually defined through the principle of alternative possibilities: an agent (or a coalition of agents) is blamable for $\phi$ if $\phi$ is true and the agent had a strategy to prevent it~\cite{f69tjop,w17}. This definition is also referred to as the  counterfactual definition of blameworthiness~\cite{c15cop}. In our case, Poddar is blamable for the death of Tatiana because he could have taken actions (to refrain from shooting and stabbing her) that would have prevented her death. He was found guilty of second-degree murder and sentenced to five years~\cite{a17timeline}.
%We write this as 
% $
% \B_{\mbox{\scriptsize Poddar}}(\mbox{``Tatiana is dead.''}).
% $
The principle of alternative possibilities, sometimes referred to as ``counterfactual possibility''~\cite{c15cop}, is also used to define causality~\cite{lewis13,h16,bs18aaai}. A sound and complete axiomatization of modality ``statement $\phi$ is true and coalition $C$ had a strategy to prevent $\phi$'' is proposed in~\cite{nt19aaai}. In related works, Xu~\cite{x98jpl} and  Broersen, Herzig, and Troquard~\cite{bht09jancl} axiomatized modality ``took actions that unavoidably resulted in $\phi$'' in the cases of single agents and coalitions respectively.

According to the principle of alternative possibilities, Poddar is not the only one who is blamable for Tatiana's death. Indeed, Tatiana's parents could have asked for a temporary police protection, hired a private bodyguard, or taken Tatiana on a long vacation outside of California. Each of these actions is likely to prevent Tatiana's death. Thus, by applying the principle of alternative possibilities directly, we have to conclude that her parents should be blamed for Tatiana's death. However, the police is unlikely to provide life-time protection; the parents' resources can only be used to hire a bodyguard for a limited period time; and any vacation will have to end. These measures would only work if they knew an approximate time of a likely attack on their daughter. Without this crucial information, they had a strategy to prevent her death, but they did not know what this strategy was. If an agent has a strategy to achieve a certain outcome, knows that it has a strategy, and knows what this strategy is, then we say that the agent has a {\em know-how strategy}. Axiomatic systems for know-how strategies have been studied before~\cite{aa16jlc,fhlw17ijcai,nt17aamas,nt18ai,nt18aaai,nt19ai}. In a setting with imperfect information, it is natural to modify the principle of alternative possibilities to require an agent or a coalition to have a know-how strategy to prevent. In our case, parents had many different strategies that included taking vacations in different months. They did not know that a vacation in October would have prevented Tatiana's death. Thus, they cannot be blamed for her death according to the modified version of the principle of alternative possibilities. We write this as 
$
\neg\B_{\mbox{\scriptsize parents}}(\mbox{``Tatiana is killed''}).
$ 
%Naumov and Tao proposed a sound and complete logical system that describes the properties of coalitional blameworthiness modality $\B_C$ in the setting with imperfect information~\cite{nt18arxiv-imperfect}.

Although Tatiana's parents did not know how to prevent her death, Dr. Lawrence Moore did. He was a psychiatrist who treated Poddar at the University of California mental clinic. Poddar told Moore how he met Tatiana at the University international student house, how they started to date and how depressed Poddar became when Tatiana lost romantic interest in him. Less than two months before the tragedy, Poddar shared with the doctor his intention to buy a gun and to murder Tatiana. Dr. Moore reported this information to the University campus police. Since the University knew that Poddar was at the peak of his depression, they could estimate the possible timing of the attack. Thus, the University knew what actions the parents could take to prevent the tragedy. In general, if a coalition $C$ knows how a coalition $D$ can achieve a certain outcome, then coalition $D$ has a {\em second-order know-how} strategy to achieve the outcome. This class of strategies and a complete logical system that describes its properties were proposed in~\cite{nt18aamas}. We write $\B_C^D\phi$ if {\em $\phi$ is true and coalition $C$ knew how coalition $D$ could have prevented $\phi$}.
In our case,
$
\B_{\mbox{\scriptsize university}}^{\mbox{\scriptsize parents}}(\mbox{``Tatiana is killed''}).
$

After Tatiana's death, her parents sued the University. In 1976 the California Supreme Court ruled that ``When a therapist determines, or pursuant to the standards of his profession should determine, that his patient presents a serious danger of violence to another, he incurs an obligation to use reasonable care to protect the intended victim against such danger. The discharge of this duty may require the therapist to take one or more of various steps, depending upon the nature of the case. Thus it may call for him to warn the intended victim or others likely to apprise the victim of the danger, to notify the police, or to take whatever other steps are reasonably necessary under the circumstances.''~\cite{t76opinion}.  
In other words, the California Supreme Court ruled that 
in this case the duty to warn is not only a moral obligation but a legal one as well. %~\cite{t76opinion}. 
In this paper we propose a sound and complete logical system that describes the interplay between the distributed knowledge modality $\K_C$ and the second-order blameworthiness or {\em duty to warn} modality $\B^D_C$. The (first-order) blameworthiness modality $\B_C\phi$ mentioned earlier could be viewed as an abbreviation for $\B_C^C\phi$. For example, 
$
\B_{\mbox{\scriptsize Poddar}}(\mbox{``Tatiana is killed''})
$
because Poddar knew how he himself could prevent Tatiana's death.
%This work builds on the results in ~\cite{nt18aamas}. The proof of the completeness is using a new construction not present in the completeness proofs from these two previous works, see Section~\ref{completeness section}. %We highlight the differences between the three proofs in the conclusion.   
 
The paper is organized as follows. In the next section we introduce and discuss the formal syntax and semantics of our logical system. In Section~\ref{axioms section} we list axioms and compare them to those in the related logical systems. Section~\ref{examples section} gives examples of formal proofs in our system.  Section~\ref{soundness section} and Section~\ref{completeness section} contain the proofs of the soundness and the completeness, respectively. Section~\ref{conclusion section} concludes.

\section{Syntax and Semantics}\label{syntax and semantics section}

In this section we introduce the formal syntax and semantics of our logical system. We assume a fixed set of propositional variables and a fixed set of agents $\mathcal{A}$. By a coalition we mean any subset of $\mathcal{A}$.
% \begin{definition}\label{Phi}
% $\Phi$ is the minimal set of formulae such that
% \begin{enumerate}
%     \item $p\in\Phi$ for each propositional variable $p$,
%     \item $\phi\to\psi,\neg\phi\in\Phi$ for all formulae $\phi,\psi\in\Phi$,
%     \item $\K_C\phi$, $\B^D_C\phi\in\Phi$ for all coalitions $C,D\subseteq\mathcal{A}$, and each formula $\phi\in\Phi$. 
% \end{enumerate}
% \end{definition}
The language $\Phi$ of our logical system is defined by  grammar:
$$
\phi := p\;|\;\neg\phi\;|\;\phi\to\phi\;|\;\K_C\phi\;|\;\B^D_C\phi,
$$
where $C$ and $D$ are arbitrary coalitions. Boolean connectives $\bot$, $\wedge$, and $\vee$ are defined through $\neg$ and $\to$ in the usual way. By $\cK_C\phi$ we denote the formula $\neg\K_C\neg\phi$ and by $X^Y$ the set of all functions from set $Y$ to set $X$.

\begin{definition}\label{game definition}
A game is a tuple $\left(I, \{\sim_a\}_{a\in\mathcal{A}},\Delta,\Omega,P,\pi\right)$, where 
\begin{enumerate}
    \item $I$ is a set of ``initial states'',
    \item $\sim_a$ is an ``indistinguishability'' equivalence relation on the set of initial states $I$, for each agent $a\in\mathcal{A}$,
    \item $\Delta$ is a set of ``actions'',
    \item $\Omega$ is a set of ``outcomes'',
    %\item $\approx_a$ is an  ``indistinguishability'' equivalence relation on set $\Omega$ for each agent $a\in\mathcal{A}$,
    \item a set of ``plays'' $P$ is an arbitrary set of tuples $(\alpha,\delta,\omega)$ such that $\alpha\in I$, $\delta\in\Delta^\mathcal{A}$, and $\omega\in\Omega$. Furthermore, we assume that for each initial state $\alpha\in I$ and each function $\delta\in\Delta^\mathcal{A}$, there is at least one outcome $\omega\in \Omega$ such that $(\alpha,\delta,\omega)\in P$,
    \item $\pi(p)\subseteq P$ for each propositional variable $p$.
\end{enumerate}
\end{definition} 
By a complete (action) profile we mean any function $\delta\in\Delta^\mathcal{A}$ that maps agents in $\mathcal{A}$ into actions in $\Delta$. By an (action) profile of a coalition $C$ we mean any function from set $\Delta^C$.

\begin{figure}[ht]
\begin{center}
\vspace{-2mm}
\scalebox{0.4}{\includegraphics{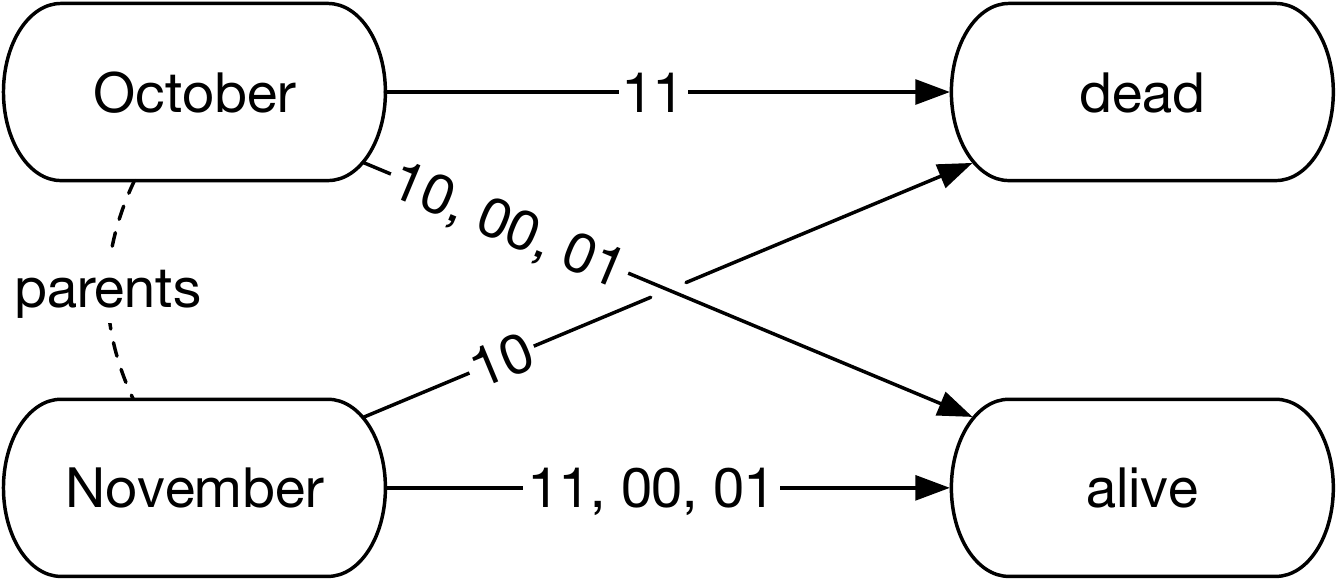}}
\vspace{0mm}
%\footnotesize
\caption{Poddar's actions: not attack (0) or attack (1). Parents' actions: take vacation in October (0) or November (1).}\label{example figure}
\vspace{0mm}
\end{center}
\vspace{-2mm}
\end{figure}

Figure~\ref{example figure} depicts a diagram of the game for the Tarasoff case. It shows two possible initial states:  October and November that represent two possible months with the peak of Poddar's depression. The actual initial state was October, which was known to the University, but not to Tatiana's parents. In other words, the University could distinguish these two states, but the parents could not. We show the indistinguishability relation by dashed lines. At the peak of his depression,  agent Poddar might decide not to attack Tatiana (action 0) or to attack her (action 1). Parents, whom we represent by a single agent for the sake of simplicity, might decide to take vacation in October (action 0) or November (action 1). Thus, in our example, $\Delta=\{0,1\}$. Set $\Omega$ consists of outcomes  $dead$ and $alive$. Recall that a complete action profile is a function from agents into actions. Since in our case there are only two agents (Poddar and parents), we write action profiles as $xy$ where $x\in \{0,1\}$ is an action of Poddar and $y\in\{0,1\}$ is an action of the parents. The plays of the game are all possible valid combinations of an initial state, a complete action profile, and an outcome. The plays are represented in the diagram by directed edges. For example, the directed edge from initial state October to outcome $dead$ is labeled with action profile $11$. This means that $(\mbox{October},11,dead)\in P$. In other words, if the peak of depression is in October, Poddar decides to attack (1), and the parents take vacation in November (1), then Tatiana is dead. Multiple labels on the same edge of the diagram represent multiple plays with the same initial state and the same outcome.  %We also assume that some combinations of initial states and complete action profiles might have no outcomes. Informally, this represents the ability of agents to halt the game without reaching an outcome. If this property is not desirable, an additional axiom $\neg\B_C^D\top$ should be added to our logical system with the proof of the completeness remaining mostly unchanged.

Function $\pi$ specifies the meaning of propositional variables. Namely, $\pi(p)$ is the set of all plays for which proposition $p$ is true.

Next is the core definition of this paper. Its item 5 formally defines the semantics of modality $\B_C^D$.
Traditionally, in modal logic the satisfiability $\Vdash$ is defined as a relation between a state and a formula. This approach is problematic in the case of the blameworthiness modality because this modality refers to two different states: $\B^D_C\phi$ if statement $\phi$ is true in {\em the current} state and coalition $C$ knew how coalition $D$ could have prevented $\phi$ in {\em the previous} state. In other words, the meaning of formula $\B^D_C\phi$ depends not only on the current state, but on the previous one as well. We resolve this issue by defining the satisfiability as a relation between a {\em play} and a formula, where a play is a triple consisting of the previous state $\alpha$, the complete action profile $\delta$, and an outcome (state) $\omega$. We distinguish initial states from outcomes to make the presentation more elegant. Otherwise, this distinction is not significant.

We write $\omega\sim_C\omega'$ if $\omega\sim_a\omega'$ for each agent $a\in C$. We also write $f=_X g$ if $f(x)=g(x)$ for each element $x\in X$.

\begin{definition}\label{sat} 
For any game $\left(I, \{\sim_a\}_{a\in\mathcal{A}},\Delta,\Omega,P,\pi\right)$, any formula $\phi\in\Phi$, and any play $(\alpha,\delta,\omega)\in P$, 
the satisfiability relation $(\alpha,\delta,\omega)\Vdash\phi$ is defined recursively as follows:
\begin{enumerate}
    \item $(\alpha,\delta,\omega)\Vdash p$ if $(\alpha,\delta,\omega)\in \pi(p)$,
    \item $(\alpha,\delta,\omega)\Vdash \neg\phi$ if $(\alpha,\delta,\omega)\nVdash \phi$,
    \item $(\alpha,\delta,\omega)\Vdash\phi\to\psi$ if $(\alpha,\delta,\omega)\nVdash\phi$ or $(\alpha,\delta,\omega)\Vdash\psi$,
    \item $(\alpha,\delta,\omega)\Vdash\K_C\phi$ if $(\alpha',\delta',\omega')\Vdash\phi$ for each $(\alpha',\delta',\omega')\in P$ such that $\alpha\sim_C\alpha'$, 
    \item $(\alpha,\delta,\omega)\Vdash\B_C^D\phi$ if $(\alpha,\delta,\omega)\Vdash\phi$ and there is a profile $s\in \Delta^D$ such that for each play $(\alpha',\delta',\omega')\in P$, if $\alpha\sim_C\alpha'$ and $s=_D\delta'$, then $(\alpha',\delta',\omega')\nVdash\phi$.
\end{enumerate}
\end{definition}

%Note that items 4 and 5 in the above definition specify the semantics of modalities $\K$ and $\B$ in terms of indistinguishability relations on initial states (as opposite to outcomes). Thus, these two modalities capture {\em a priori} knowledge.

Going back to our running example, 
$$
(\mbox{October},11,dead)\Vdash\B_{\mbox{\scriptsize university}}^{\mbox{\scriptsize parents}}(\mbox{``Tatiana is killed''})
$$
because
$
(\mbox{October},11,dead)\Vdash\mbox{``Tatiana is killed''}
$
and
$$
(\alpha',\delta',\omega')\nVdash(\mbox{``Tatiana is killed''})
$$
for each play $(\alpha',\delta',\omega')\in P$ such that $\alpha'\sim_{\mbox{\scriptsize university}}\mbox{October}$ and $\delta'(\mbox{parents})=0$.

%\section{Discussion}\label{discussion section}

Because the satisfiability  is defined as a relation between plays and formulae, one can potentially talk about two forms of knowledge {\em about a play} in our system: {\em a priori} knowledge in the initial state and {\em a posteriori} knowledge in the outcome. The knowledge captured by the  modality $\K$ as well as the knowledge implicitly referred to by the modality $\B$, see item (5) of Definition~\ref{sat}, is {\em a priori} knowledge about a play. In order to define a posteriori knowledge in our setting, one would need to add an indistinguishability relation on outcomes to Definition~\ref{game definition}. We do not consider a posteriori knowledge because one should not be blamed for something that the person only knows how to prevent {\em post-factum}.

%Note that we treat $P$ as a relation. Thus, we allow multiple outcomes for the same initial state and same complete action profile.

Since we define the second-order blameworthiness using distributed knowledge, if a coalition $C$ is blamable for not warning coalition $D$, then any superset $C'\supseteq C$ could be blamed for not warning $D$. One might argue that the definition of blameworthiness modality $\B^D_C$ should include a minimality condition on the coalition $C$. We do not include this condition in item (5) of Definition~\ref{sat}, because there are several different ways to phrase the minimality, all of which could be expressed through our basic modality $\B^D_C$. 

First of all, we can say that $C$ is the minimal coalition among those coalitions that knew how $D$ could have prevented $\phi$. Let us denote this modality by $[1]^D_C\phi$. It can be expressed through $\B^D_C$ as:
$$
[1]^D_C\phi\equiv\B^D_C\phi\wedge \neg\bigvee_{E\subsetneq C}\B^D_E\phi.
$$

Second, we can say that $C$ is the minimal coalition that knew how {\em somebody} could have prevented $\phi$:
$$
[2]^D_C\phi\equiv\B^D_C\phi\wedge \neg\bigvee_{E\subsetneq C}\bigvee_{F\subseteq\mathcal{A}}\B^F_E\phi.
$$

Third, we can say that $C$ is the minimal coalition that knew how {\em the smallest} coalition $D$ could have prevented $\phi$: 
$$
[3]^D_C\phi\equiv\B^D_C\phi\wedge \neg\bigvee_{E\subseteq \mathcal{A}}\bigvee_{F\subsetneq D}\B^F_E\phi \wedge \neg\bigvee_{E\subsetneq C}\B^D_E\phi.
$$

Finally, we can say that $C$ is the minimal coalition that knew how {\em some smallest} coalition could have prevented $\phi$: 
$$
[4]_C\phi\equiv \bigvee_{D\subseteq\mathcal{A}}\left(\B^D_C\phi\wedge \neg\bigvee_{E\subseteq \mathcal{A}}\bigvee_{F\subsetneq D}\B^F_E\phi \wedge \neg\bigvee_{E\subsetneq C}\B^D_E\phi\right).
$$
The choice of the minimality condition depends on the specific situation. Instead of making a choice between several possible alternatives, in this paper we study the basic blameworthiness modality without a minimality condition through which modalities  $[1]^D_C\phi$, $[2]^D_C\phi$, $[3]^D_C\phi$, $[4]_C\phi$, and possibly others could be defined.

\section{Axioms}\label{axioms section}

In addition to the propositional tautologies in  language $\Phi$, our logical system contains the following axioms:

\begin{enumerate}
    \item Truth: $\K_C\phi\to\phi$ and $\B^D_C\phi\to\phi$,
    \item Distributivity: $\K_C(\phi\to\psi)\to(\K_C\phi\to \K_C\psi)$,
    \item Negative Introspection: $\neg\K_C\phi\to\K_C\neg\K_C\phi$,
    \item Monotonicity: $\K_C\phi\to\K_E\phi$ and $\B^D_C\phi\to\B^F_E\phi$,\\ where $C\subseteq E$ and $D\subseteq F$,
    \item None to Act: $\neg\B^\varnothing_C\phi$,
    %\item OLD Joint Responsibility: if $D\cap F=\varnothing$, then\\ $\cK_C\B^D_C\phi\wedge\cK_C\B_C^F\psi\to (\phi\vee\psi\to\B_{C}^{D\cup F}(\phi\vee\psi))$,
    \item Joint Responsibility: if $D\cap F=\varnothing$, then\\ $\cK_C\B^D_C\phi\wedge\cK_E\B_E^F\psi\to (\phi\vee\psi\to\B_{C\cup E}^{D\cup F}(\phi\vee\psi))$,
 
    \item Strict Conditional:
    $\K_C(\phi\to\psi)\to(\B^D_C\psi\to(\phi\to \B^D_C\phi))$,
    \item Introspection of Blameworthiness: $\B^D_C\phi\to\K_C(\phi\to\B^D_C\phi)$.
\end{enumerate}
The Truth, the Distributivity, the Negative Introspection, and the Monotonicity axioms for modality $\K$ are the standard axioms from the epistemic logic S5 for distributed knowledge~\cite{fhmv95}. The Truth axiom for modality $\B$ states that a coalition can only be blamed for something that has actually happened. The Monotonicity axiom for modality $\B$ captures the fact that both distributed knowledge and coalition power are monotonic.

The None to Act axiom is true because the empty coalition has only one action profile. Thus, if the empty coalition can prevent $\phi$, then $\phi$ would have to be false on the current play.  This axiom is similar to the None to Blame axiom  $\neg\B_\varnothing\phi$ in~\cite{nt19aaai}.

%To understand the Joint Responsibility axiom, note that statement $\B^D_C\phi$ says that $\phi$ is {\em true} and coalition $C$ knew how $D$ could have prevented it while statement $\cK_C\B^D_C\phi$ says that coalition $C$ knows that $\phi$ was {\em possible} and $C$ also knew how $D$ could have prevented $\phi$. Note that $C$ knew a strategy that $D$ could have used to prevent $\phi$, and $E$ knew a strategy that  $F$ could have used to prevent $\psi$, then $C\cup E$ distributively knew that $D\cup F$ could have combined their strategies to prevent $\phi\vee \psi$. This is, in essence, the meaning of the Joint Responsibility axiom. The condition that sets $D$ and $F$ are disjoint is important because otherwise strategies of $D$ and $F$ cannot be combined into a single joint strategy of $D\cup F$. 
%
The Joint Responsibility axiom shows how the blame of two separate coalitions can be combined into the blame of their union. 
This axiom is closely related to Marc Pauly~\cite{p02} Cooperation axiom, which is also used in coalitional modal logics of know-how~\cite{aa16jlc,nt17aamas,nt18ai,nt18aaai} and second-order know-how~\cite{nt18aamas}. We formally prove the soundness of this axiom in Lemma~\ref{joint responsibility soundness}.

Strict conditional  $\K_C(\phi\to\psi)$ states that formula $\phi$ is known to $C$ to imply $\psi$. By contraposition, coalition $C$ knows that if $\psi$ is prevented, then $\phi$ is also prevented. The Strict Conditional axiom states that if $C$ could be second-order blamed for $\psi$, then it should also be second-order blamed for  $\phi$ as long as $\phi$ is true. A similar axiom is present in~\cite{nt19aaai}.

Finally, the Introspection of Blameworthiness axiom says that if coalition $C$ is second-order blamed for $\phi$, then $C$ knows that it is second-order blamed for $\phi$ as long as $\phi$ is true. A similar Strategic Introspection axiom for second-order know-how modality is present in~\cite{nt18aamas}.

%If superscripts and subscripts of modality $\B$ in the above axioms are equal, then the resulting axioms are exactly those of (first-order) blameworthiness in games with imperfect information~\cite{nt18arxiv-imperfect}.

We write $\vdash\phi$ if formula $\phi$ is provable from the axioms of our system using the Modus Ponens and
%$\phi,\phi\to\psi\vdash \psi$ 
the Necessitation inference rules:
$$
\dfrac{\phi,\phi\to\psi}{\psi},
\hspace{20mm}
\dfrac{\phi}{\K_C\phi}.
$$
We write $X\vdash\phi$ if formula $\phi$ is provable from the theorems of our logical system and an additional set of axioms $X$ using only the Modus Ponens inference rule.

\begin{lemma}\label{super distributivity}
If $\phi_1,\dots,\phi_n\!\vdash\!\psi$, then $\K_C\phi_1,\dots,\K_C\phi_n\!\vdash\!\K_C\psi$.
\end{lemma}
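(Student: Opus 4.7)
The plan is to prove this by induction on $n$, the number of premises, using the deduction theorem for the $X \vdash \phi$ consequence relation (which is immediate because that relation is defined as derivability using only Modus Ponens on top of the theorems of the system).

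For the base case $n = 0$, the hypothesis reduces to $\vdash \psi$, and a single application of the Necessitation rule yields $\vdash \K_C \psi$, which trivially gives the desired conclusion.

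For the inductive step, I would assume the claim holds for $n-1$ and suppose $\phi_1, \dots, \phi_n \vdash \psi$. By the deduction theorem I convert this to $\phi_1, \dots, \phi_{n-1} \vdash \phi_n \to \psi$. The induction hypothesis then gives $\K_C \phi_1, \dots, \K_C \phi_{n-1} \vdash \K_C(\phi_n \to \psi)$. The Distributivity axiom $\K_C(\phi_n \to \psi) \to (\K_C \phi_n \to \K_C \psi)$ together with two applications of Modus Ponens (adding $\K_C \phi_n$ as an extra hypothesis) then yields $\K_C \phi_1, \dots, \K_C \phi_n \vdash \K_C \psi$.

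I do not expect any significant obstacle: the only subtlety is justifying the deduction theorem for the $X \vdash \phi$ relation, but this is routine since that relation is, by definition, propositional derivability above a fixed set of theorems. The Necessitation rule is only applied once, at the very top of the derivation of the base case, so its restriction to theorems (rather than arbitrary hypotheses) does not cause any difficulty.
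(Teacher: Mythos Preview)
Your proposal is correct and takes essentially the same approach as the paper: both arguments use the deduction lemma to convert the hypothesis into a nested implication, apply Necessitation once (to a theorem, so the restriction is respected), and then use the Distributivity axiom with Modus Ponens to peel off the $\K_C$-prefixes one at a time. The only cosmetic difference is that you package the argument as an explicit induction on $n$ with Necessitation appearing in the base case, whereas the paper first applies the deduction lemma $n$ times, then Necessitation, and then iterates Distributivity $n$ times; these are the same proof reorganized.
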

\begin{proof}
By the deduction lemma applied $n$ times, assumption $\phi_1,\dots,\phi_n\vdash\psi$ implies that
$
\vdash\phi_1\to(\phi_2\to\dots(\phi_n\to\psi)\dots).
$
Thus, by the Necessitation inference rule,
$$
\vdash\K_C(\phi_1\to(\phi_2\to\dots(\phi_n\to\psi)\dots)).
$$
Hence, by the Distributivity axiom and the Modus Ponens rule,
$$
\vdash\K_C\phi_1\to\K_C(\phi_2\to\dots(\phi_n\to\psi)\dots).
$$
Then, again by the Modus Ponens rule,
$$
\K_C\phi_1\vdash\K_C(\phi_2\to\dots(\phi_n\to\psi)\dots).
$$
Therefore, $\K_C\phi_1,\dots,\K_C\phi_n\vdash\K_C\psi$ by applying the previous steps $(n-1)$ more times.
\end{proof}

The next lemma capture a well-known property of S5 modality. Its proof could be found, for example, in~\cite{nt18aamas}.

\begin{lemma}[Positive Introspection]\label{positive introspection lemma}
$\vdash \K_C\phi\to\K_C\K_C\phi$. 
\end{lemma}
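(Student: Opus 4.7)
The plan is to derive positive introspection from the Truth axiom and Negative Introspection, using Lemma~\ref{super distributivity} to lift an implication under the $\K_C$ modality. First, I would contrapose the Truth axiom instance $\K_C\neg\K_C\phi\to\neg\K_C\phi$ to obtain $\K_C\phi\to\neg\K_C\neg\K_C\phi$, which will serve as the left end of a chain.

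Next, I would contrapose Negative Introspection to obtain $\neg\K_C\neg\K_C\phi\to\K_C\phi$. Reading this as the single-premise derivation $\neg\K_C\neg\K_C\phi\vdash\K_C\phi$, Lemma~\ref{super distributivity} promotes it to $\K_C\neg\K_C\neg\K_C\phi\vdash\K_C\K_C\phi$, and hence, by the deduction theorem, to the theorem $\K_C\neg\K_C\neg\K_C\phi\to\K_C\K_C\phi$; this will be the right end of the chain.

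To bridge the two ends, I would instantiate Negative Introspection itself with $\neg\K_C\phi$ in place of $\phi$, obtaining $\neg\K_C\neg\K_C\phi\to\K_C\neg\K_C\neg\K_C\phi$. Chaining the three implications by propositional reasoning and Modus Ponens then yields $\K_C\phi\to\K_C\K_C\phi$, as required.

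The only non-routine step is spotting that Negative Introspection should be instantiated at the formula $\neg\K_C\phi$; once that substitution is made, the two contrapositives combine with super distributivity in a mechanical way, so I do not expect any genuine obstacle beyond identifying this instantiation.
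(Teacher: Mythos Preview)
Your argument is correct and is the standard S5 derivation of positive introspection from the Truth and Negative Introspection axioms; the paper itself does not spell out a proof but merely cites~\cite{nt18aamas}, so there is nothing to compare against. Your use of Lemma~\ref{super distributivity} to lift $\neg\K_C\neg\K_C\phi\vdash\K_C\phi$ under $\K_C$ is exactly the kind of step the paper's machinery supports.
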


\section{Examples of Derivations}\label{examples section}

The soundness of our logical system is established in the next section. Here we prove several lemmas about our formal system that will be used later in the proof of the completeness. %All of these lemmas, stated for modality $\N$ instead of modality $\K$ originally appeared in~\cite{nt18arxiv-blameworthiness}. 

\begin{lemma}\label{alt fairness lemma}
$\vdash\cK_C\B^D_C\phi\to(\phi\to\B^D_C\phi)$.
\end{lemma}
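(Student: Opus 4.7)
The plan is to combine the Introspection of Blameworthiness axiom with Negative Introspection, using the familiar S5-style trick that lets one replace $\K_C$ by $\cK_C$ on the premise of an implication.

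First I would take the Introspection of Blameworthiness axiom $\B^D_C\phi\to\K_C(\phi\to\B^D_C\phi)$ and contrapose it to get $\neg\K_C(\phi\to\B^D_C\phi)\to\neg\B^D_C\phi$. Applying the Necessitation rule and then the Distributivity axiom, I would push $\K_C$ through this implication to obtain $\K_C\neg\K_C(\phi\to\B^D_C\phi)\to\K_C\neg\B^D_C\phi$.

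Next I would invoke the Negative Introspection axiom, instantiated at $\phi\to\B^D_C\phi$, to get $\neg\K_C(\phi\to\B^D_C\phi)\to\K_C\neg\K_C(\phi\to\B^D_C\phi)$. Chaining this with the previous implication yields $\neg\K_C(\phi\to\B^D_C\phi)\to\K_C\neg\B^D_C\phi$. Contraposing once more gives
\[
\neg\K_C\neg\B^D_C\phi\to\K_C(\phi\to\B^D_C\phi),
\]
which is exactly $\cK_C\B^D_C\phi\to\K_C(\phi\to\B^D_C\phi)$ by the definition of $\cK_C$.

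Finally, the Truth axiom for $\K_C$ gives $\K_C(\phi\to\B^D_C\phi)\to(\phi\to\B^D_C\phi)$, and composing this with the implication just obtained produces the desired $\cK_C\B^D_C\phi\to(\phi\to\B^D_C\phi)$. The whole derivation is a straightforward sequence of Necessitation/Distributivity steps plus two contrapositions, so there is no real obstacle; the only point requiring care is to make sure the Negative Introspection axiom is applied to the correct compound formula $\phi\to\B^D_C\phi$ rather than to $\B^D_C\phi$ alone, since it is precisely this choice that allows the outer $\K_C$ to be stripped off.
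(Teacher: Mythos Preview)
Your proof is correct and follows essentially the same route as the paper's: contrapose the Introspection of Blameworthiness axiom, push $\K_C$ through via Necessitation and Distributivity, chain with Negative Introspection (applied to $\phi\to\B^D_C\phi$), contrapose again, and finish with the Truth axiom. The only cosmetic difference is that the paper unfolds the definition of $\cK_C$ in the very last line rather than one step earlier.
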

\begin{proof}
Note that $\vdash \B_C^D\phi\to\K_C(\phi\to\B_C^D\phi)$ by the Introspection of Blameworthiness axiom. Thus, $\vdash \neg\K_C(\phi\to\B_C^D\phi)\to \neg\B_C^D\phi$, by the law of contrapositive. Then, $\vdash \K_C(\neg\K_C(\phi\to\B_C^D\phi)\to \neg\B_C^D\phi)$ by the Necessitation inference rule. Hence, by the Distributivity axiom and the Modus Ponens inference rule,
$$\vdash \K_C\neg\K_C(\phi\to\B_C^D\phi)\to \K_C\neg\B_C^D\phi.$$ 
At the same time, by the Negative Introspection axiom:
$$
\vdash \neg\K_C(\phi\to\B_C^D\phi)\to\K_C\neg\K_C(\phi\to\B_C^D\phi).
$$
Then, by the laws of propositional reasoning,
$$\vdash \neg\K_C(\phi\to\B_C^D\phi)\to \K_C\neg\B_C^D\phi.$$
Thus, by the law of contrapositive,
$$\vdash \neg\K_C\neg\B_C^D\phi\to \K_C(\phi\to\B_C^D\phi).$$
Since $\K_C(\phi\to\B_C^D\phi)\to(\phi\to\B_C^D\phi)$ is an instance of the Truth axiom, by propositional reasoning,
$$\vdash \neg\K_C\neg\B_C^D\phi\to (\phi\to\B_C^D\phi).$$
Therefore, $\vdash \cK_C\B_C^D\phi\to (\phi\to\B_C^D\phi)$ by the definition of $\cK_C$.
\end{proof}

\begin{lemma}\label{alt cause lemma}
If $\vdash \phi\leftrightarrow \psi$, then $\vdash \B^D_C\phi\to\B^D_C\psi$.
\end{lemma}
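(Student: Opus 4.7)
The plan is to derive the conclusion directly from the Strict Conditional axiom together with the Truth axiom, using the Necessitation rule to lift one direction of the equivalence into the scope of $\K_C$. The key observation is that the Strict Conditional axiom $\K_C(\phi\to\psi)\to(\B^D_C\psi\to(\phi\to \B^D_C\phi))$ already lets us transfer second-order blameworthiness between two formulas whenever the ``source'' formula is known by $C$ to imply the ``target'' formula, with the mild side-condition that the source formula is true. When the two formulas are provably equivalent, this side-condition is easy to discharge using the Truth axiom.

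Concretely, I would proceed as follows. First, from the hypothesis $\vdash \phi\leftrightarrow\psi$, extract by propositional reasoning the two implications $\vdash\psi\to\phi$ and $\vdash\phi\to\psi$. Apply the Necessitation rule to the first one to obtain $\vdash\K_C(\psi\to\phi)$. Then use the Strict Conditional axiom in the form
\begin{equation*}
\K_C(\psi\to\phi)\to(\B^D_C\phi\to(\psi\to \B^D_C\psi))
\end{equation*}
(i.e., with the roles of $\phi$ and $\psi$ swapped) together with Modus Ponens to get $\vdash\B^D_C\phi\to(\psi\to\B^D_C\psi)$.

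To discharge the extra $\psi$ antecedent, combine $\vdash\phi\to\psi$ with the Truth axiom instance $\vdash\B^D_C\phi\to\phi$ to obtain $\vdash\B^D_C\phi\to\psi$ by propositional reasoning. A final round of propositional reasoning then collapses $\B^D_C\phi\to(\psi\to\B^D_C\psi)$ and $\B^D_C\phi\to\psi$ into $\vdash\B^D_C\phi\to\B^D_C\psi$, as desired.

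I do not expect a real obstacle here: each step is either a direct invocation of an axiom, a single application of Necessitation or Modus Ponens, or elementary propositional reasoning. The only subtlety worth flagging is the direction in which to instantiate the Strict Conditional axiom, since a careless instantiation would give $\B^D_C\psi\to\B^D_C\phi$ instead; the right move is to put the ``target'' $\psi$ on the right of the implication inside $\K_C$, so that the axiom delivers exactly the transfer $\B^D_C\phi\to\B^D_C\psi$ we need.
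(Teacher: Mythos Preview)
Your proposal is correct and follows essentially the same route as the paper's proof: instantiate the Strict Conditional axiom with the roles of $\phi$ and $\psi$ swapped, use Necessitation on $\vdash\psi\to\phi$ to discharge the $\K_C$-premise, and then eliminate the residual $\psi$ antecedent via the Truth axiom together with $\vdash\phi\to\psi$. The only cosmetic difference is that the paper explicitly rewrites $\B^D_C\phi\to(\psi\to\B^D_C\psi)$ as $(\B^D_C\phi\to\psi)\to(\B^D_C\phi\to\B^D_C\psi)$ before applying Modus Ponens, whereas you fold this into a single ``propositional reasoning'' step.
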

\begin{proof}
By the Strict Conditional axiom,
$$
\vdash \K_C(\psi\to\phi)\to(\B_C^D\phi\to(\psi\to \B_C^D\psi)).
$$
Assumption  $\vdash \phi\leftrightarrow \psi$ implies $\vdash \psi\to \phi$ by the laws of propositional reasoning. Hence, $\vdash \K_C(\psi\to \phi)$ by the Necessitation inference rule. Thus, by the Modus Ponens rule,
$
\vdash \B_C^D\phi\to(\psi\to \B_C^D\psi).
$
Then, by the laws of propositional reasoning,
\begin{equation}\label{sofia}
\vdash (\B_C^D\phi\to\psi)\to (\B_C^D\phi\to \B_C^D\psi).
\end{equation}
Observe that $\vdash \B_C^D\phi\to\phi$ by the Truth axiom. Also, $\vdash \phi\leftrightarrow \psi$ by the assumption of the lemma. Then, by the laws of propositional reasoning, $\vdash \B_C^D\phi\to\psi$. Therefore,
$
\vdash \B_C^D\phi\to \B_C^D\psi
$
by the Modus Ponens inference rule from statement~(\ref{sofia}).
\end{proof}

\begin{lemma}\label{add cK lemma}
$\phi\vdash \cK_C\phi$.
\end{lemma}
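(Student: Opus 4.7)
The plan is to derive $\phi \to \cK_C\phi$ as a theorem of the system and then apply Modus Ponens to the assumption $\phi$. Unpacking the definition, $\cK_C\phi$ abbreviates $\neg\K_C\neg\phi$, so the target is $\phi \vdash \neg\K_C\neg\phi$.

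First, I would instantiate the Truth axiom for $\K_C$ with the formula $\neg\phi$, yielding $\vdash \K_C\neg\phi \to \neg\phi$. Next, I would apply propositional reasoning (contraposition together with double negation elimination) to obtain $\vdash \phi \to \neg\K_C\neg\phi$, which is exactly $\vdash \phi \to \cK_C\phi$ by the definition of $\cK_C$. Finally, from the assumption $\phi$ and this theorem, the Modus Ponens rule gives $\phi \vdash \cK_C\phi$.

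There is no real obstacle here: the lemma is just a syntactic counterpart of the semantic fact that ``$\phi$ holds'' entails ``$\phi$ is consistent with $C$'s knowledge.'' The only subtlety worth flagging is that the step uses $\vdash$-level reasoning to produce the theorem $\phi \to \cK_C\phi$ before combining it with the hypothesis at the $X\vdash$-level, which is legitimate because the paper explicitly allows $X\vdash\phi$ to use all theorems of the system together with Modus Ponens.
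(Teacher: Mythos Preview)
Your proof is correct and follows essentially the same route as the paper's own argument: instantiate the Truth axiom at $\neg\phi$, contrapose to get $\vdash\phi\to\neg\K_C\neg\phi$, rewrite via the definition of $\cK_C$, and finish with Modus Ponens. The only addition is your remark about mixing $\vdash$-level theorems into $X\vdash$-derivations, which is accurate but not something the paper bothers to spell out.
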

\begin{proof}
By the Truth axioms, $\vdash\K_C\neg\phi\to\neg\phi$. Hence, by the law of contrapositive, $\vdash\phi\to \neg\K_C\neg\phi$. Thus, $\vdash\phi\to \cK_C\phi$ by the definition of the modality $\cK_C$. Therefore, $\phi\vdash \cK_C\phi$ by the Modus Ponens inference rule.
\end{proof}

The next lemma generalizes the Joint Responsibility axiom from two coalitions to multiple coalitions. 

\begin{lemma}\label{super joint responsibility lemma}
For any integer $n\ge 0$,
$$
\{\cK_{E_i}\B^{F_i}_{E_i}\chi_i\}_{i=1}^n,\chi_1\vee\dots\vee\chi_n
\vdash \B^{F_1\cup\dots\cup F_n}_{E_1\cup\dots\cup E_n}(\chi_1\vee \dots\vee\chi_n),
$$
where sets $F_1,\dots,F_n$ are pairwise disjoint. 
\end{lemma}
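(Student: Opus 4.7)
The plan is to proceed by induction on $n$. The base case $n=0$ is trivial: the empty disjunction $\chi_1\vee\dots\vee\chi_0$ is $\bot$, from which everything follows by propositional reasoning. The base case $n=1$ reduces to $\cK_{E_1}\B^{F_1}_{E_1}\chi_1,\,\chi_1\vdash\B^{F_1}_{E_1}\chi_1$, which is immediate from Lemma~\ref{alt fairness lemma} and Modus Ponens.

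For the inductive step $n\to n+1$, abbreviate $\phi=\chi_1\vee\dots\vee\chi_n$, $C=E_1\cup\dots\cup E_n$, $D=F_1\cup\dots\cup F_n$, $E=E_{n+1}$, $F=F_{n+1}$, and $\psi=\chi_{n+1}$. The natural first attempt is to apply the induction hypothesis to the first $n$ hypotheses to derive $\B^D_C\phi$, promote it to $\cK_C\B^D_C\phi$ via Lemma~\ref{add cK lemma}, and combine with the hypothesis $\cK_E\B^F_E\psi$ through the Joint Responsibility axiom (the required condition $D\cap F=\varnothing$ is free from the lemma's pairwise-disjointness assumption on the $F_i$'s). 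The obstruction is that the induction hypothesis needs the additional premise $\phi$, whereas only the weaker $\phi\vee\psi$ is at hand.

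I would therefore do a case split on $\phi\vee\psi$. The $\phi$ branch proceeds exactly as above and yields $\B^{D\cup F}_{C\cup E}(\phi\vee\psi)$, i.e. $\B^{F_1\cup\dots\cup F_{n+1}}_{E_1\cup\dots\cup E_{n+1}}(\chi_1\vee\dots\vee\chi_{n+1})$. The $\psi$ branch uses the same blueprint with the indices shifted by one: from $\{\cK_{E_i}\B^{F_i}_{E_i}\chi_i\}_{i=2}^{n+1}$ and $\chi_2\vee\dots\vee\chi_{n+1}$ (implied by $\psi$), the induction hypothesis delivers $\B^{F_2\cup\dots\cup F_{n+1}}_{E_2\cup\dots\cup E_{n+1}}(\chi_2\vee\dots\vee\chi_{n+1})$; promoting this via Lemma~\ref{add cK lemma} and pairing with the hypothesis $\cK_{E_1}\B^{F_1}_{E_1}\chi_1$ via Joint Responsibility closes this case. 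Lemma~\ref{alt cause lemma} smooths over the different associativities of the disjunction so that both branches produce the desired $\B^{F_1\cup\dots\cup F_{n+1}}_{E_1\cup\dots\cup E_{n+1}}(\chi_1\vee\dots\vee\chi_{n+1})$.

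The main obstacle is exactly the asymmetry just described: the Joint Responsibility axiom needs a $\cK$-form statement on one side of the split, but Lemma~\ref{add cK lemma} can only lift $\B$ to $\cK\B$ once a $\B$ has actually been derived, and the induction hypothesis only produces a $\B$ under its own disjunctive premise. The reindexing maneuver in the $\psi$ branch is what breaks this deadlock and makes a straightforward induction go through.
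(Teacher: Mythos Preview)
Your proposal is correct and follows essentially the same strategy as the paper's proof: induction on $n$ with the same two base cases, and in the inductive step two overlapping applications of the induction hypothesis (to the first block and to the last block of indices), each promoted via Lemma~\ref{add cK lemma} and then glued to the remaining single hypothesis by the Joint Responsibility axiom, with Lemma~\ref{alt cause lemma} handling the re-association of the disjunction. The only cosmetic difference is your case split: you split on $\phi$ versus $\psi=\chi_{n+1}$ and then strengthen $\psi$ to $\chi_2\vee\dots\vee\chi_{n+1}$ before invoking the induction hypothesis, whereas the paper splits directly on the two overlapping length-$(n{-}1)$ disjunctions $(\chi_1\vee\dots\vee\chi_{n-1})$ and $(\chi_2\vee\dots\vee\chi_n)$; after your strengthening step the two arguments coincide.
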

\begin{proof}
We prove the lemma by induction on $n$. If $n=0$, then disjunction $\chi_1\vee\dots\vee \chi_n$ is Boolean constant false $\bot$. Hence, the statement of the lemma, $\bot\vdash\B_\varnothing^\varnothing\bot$, is provable in the propositional logic.

Next, assume that $n=1$. Then, from Lemma~\ref{alt fairness lemma} using Modus Ponens rule twice, we get
$\cK_{E_1}\B_{E_1}^{F_1}\chi_1,\chi_1\vdash\B_{E_1}^{F_1}\chi_1$.

Assume now that $n\ge 2$. By the Joint Responsibility axiom and the Modus Ponens inference rule,
\begin{eqnarray*}
&&\hspace{-8mm}\cK_{E_1\cup \dots \cup E_{n-1}}\B_{E_1\cup \dots \cup E_{n-1}}^{F_1\cup \dots \cup F_{n-1}}(\chi_1\vee\dots\vee\chi_{n-1}),
\cK_{E_n}\B_{E_n}^{F_n}\chi_n,\\
&&\hspace{-8mm}\chi_1\vee\dots\vee\chi_{n-1}\vee\chi_n\vdash \B_{E_1\cup \dots \cup E_{n-1}\cup E_{n}}^{F_1\cup \dots \cup F_{n-1}\cup F_{n}}(\chi_1\vee\dots\vee\chi_{n-1}\vee \chi_n).
\end{eqnarray*}
Hence, by Lemma~\ref{add cK lemma},
\begin{eqnarray*}
&&\hspace{-8mm}\B_{E_1\cup \dots \cup E_{n-1}}^{F_1\cup \dots \cup F_{n-1}}(\chi_1\vee\dots\vee\chi_{n-1}),
\cK_{E_n}\B_{E_n}^{F_n}\chi_n,\chi_1\vee\dots\vee\chi_{n-1}\vee\chi_n\\
&&\vdash \B_{E_1\cup \dots \cup E_{n-1}\cup E_{n}}^{F_1\cup \dots \cup F_{n-1}\cup F_{n}}(\chi_1\vee\dots\vee\chi_{n-1}\vee \chi_n).
\end{eqnarray*}
At the same time, by the induction hypothesis,
$$\{\cK_{E_i}\B_{E_i}^{F_i}\chi_i\}_{i=1}^{n-1},\chi_1\vee\dots\vee\chi_{n-1}
\vdash \B_{E_1\cup \dots \cup E_{n-1}}^{F_1\cup \dots \cup F_{n-1}}(\chi_1\vee \dots\vee\chi_{n-1}).$$
Thus,
\begin{eqnarray*}
&&\hspace{-5mm}\{\cK_{E_i}\B_{E_i}^{F_i}\chi_i\}_{i=1}^n,\chi_1\vee\dots\vee\chi_{n-1},\chi_1\vee\dots\vee\chi_{n-1}\vee\chi_n\\
&&\vdash \B_{E_1\cup \dots \cup E_{n-1}\cup E_{n}}^{F_1\cup \dots \cup F_{n-1}\cup F_{n}}(\chi_1\vee \dots\vee\chi_{n-1}\vee\chi_n).
\end{eqnarray*}
Note that $\chi_1\vee\dots\vee\chi_{n-1}\vdash\chi_1\vee\dots\vee\chi_{n-1}\vee\chi_n$ is provable in the propositional logic. Thus,
\begin{eqnarray}
&&\hspace{-10mm}\{\cK_{E_i}\B_{E_i}^{F_i}\chi_i\}_{i=1}^n,\chi_1\vee\dots\vee\chi_{n-1}\nonumber\\
&&\hspace{0mm} \vdash \B_{E_1\cup \dots \cup E_{n-1}\cup E_{n}}^{F_1\cup \dots \cup F_{n-1}\cup F_{n}}(\chi_1\vee \dots\vee\chi_{n-1}\vee\chi_n).\label{part 1}
\end{eqnarray}
Similarly, by the Joint Responsibility axiom and the Modus Ponens inference rule,
\begin{eqnarray*}
&&\hspace{-8mm}\cK_{E_1}\B_{E_1}^{F_1}\chi_1,\cK_{E_2\cup \dots \cup E_n}\B_{E_2\cup \dots \cup E_n}^{F_2\cup \dots \cup F_n}(\chi_2\vee\dots\vee\chi_n),\\
&&\hspace{-8mm}\chi_1\vee(\chi_2\vee\dots\vee\chi_n)\vdash \B_{E_1\cup \dots \cup E_{n-1}\cup E_n}^{F_1\cup \dots \cup F_{n-1}\cup F_n}(\chi_1\vee(\chi_2\vee\dots\vee \chi_n)).
\end{eqnarray*}
Because formula 
$\chi_1\vee(\chi_2\vee\dots\vee \chi_n)\leftrightarrow \chi_1\vee\chi_2\vee\dots\vee \chi_n$ is provable in the propositional logic, by Lemma~\ref{alt cause lemma},
\begin{eqnarray*}
&&\hspace{-8mm}\cK_{E_1}\B_{E_1}^{F_1}\chi_1,\cK_{E_2\cup \dots \cup E_n}\B_{E_2\cup \dots \cup E_n}^{F_2\cup \dots \cup F_n}(\chi_2\vee\dots\vee\chi_n),\\
&&\hspace{-8mm}\chi_1\vee\chi_2\vee\dots\vee\chi_n\vdash \B_{E_1\cup \dots \cup E_{n-1}\cup E_n}^{F_1\cup \dots \cup F_{n-1}\cup F_n}(\chi_1\vee\chi_2\vee\dots\vee \chi_n).
\end{eqnarray*}
Hence, by Lemma~\ref{add cK lemma},
\begin{eqnarray*}
&&\hspace{-7mm}\cK_{E_1}\B_{E_1}^{F_1}\chi_1,\B_{E_2\cup \dots \cup E_n}^{F_2\cup \dots \cup F_n}(\chi_2\vee\dots\vee\chi_n),\chi_1\vee\chi_2\vee\dots\vee\chi_n\\
&&\vdash \B_{E_1\cup \dots \cup E_{n-1}\cup E_n}^{F_1\cup \dots \cup F_{n-1}\cup F_n}(\chi_1\vee\chi_2\vee\dots\vee \chi_n).
\end{eqnarray*}
At the same time, by the induction hypothesis,
$$
\{\cK_{E_i}\B_{E_i}^{F_i}\chi_i\}_{i=2}^n,\chi_2\vee\dots\vee\chi_n
\vdash \B_{E_2\cup\dots\cup E_n}^{F_2\cup\dots\cup F_n}(\chi_2\vee \dots\vee\chi_n).
$$
Thus,
\begin{eqnarray*}
&&\hspace{-8mm}\{\cK_{E_i}\B_{E_i}^{F_i}\chi_i\}_{i=1}^n,\chi_2\vee\dots\vee\chi_n,\chi_1\vee\chi_2\vee\dots\vee\chi_n\\
&&\vdash \B_{E_1\cup\dots\cup E_{n-1}\cup E_n}^{F_1\cup\dots\cup F_{n-1}\cup F_n}(\chi_1\vee\chi_2\vee\dots\vee\chi_n).
\end{eqnarray*}
Note that $\chi_2\vee\dots\vee\chi_{n}\vdash\chi_1\vee\dots\vee\chi_{n-1}\vee\chi_n$ is provable in the propositional logic. Thus,
\begin{eqnarray}
&&\hspace{-10mm}\{\cK_{E_i}\B_{E_i}^{F_i}\chi_i\}_{i=1}^n,\chi_2\vee\dots\vee\chi_n\nonumber\\
&&\hspace{0mm} \vdash \B_{E_1\cup\dots\cup E_{n-1}\cup E_n}^{F_1\cup\dots\cup F_{n-1}\cup F_n}(\chi_1\vee\chi_2\vee\dots\vee\chi_n).\label{part 2}
\end{eqnarray}
Finally, note that the following statement is provable in the propositional logic for $n\ge 2$,
$$
\vdash\chi_1\vee\dots\vee\chi_n\to(\chi_1\vee\dots\vee\chi_{n-1})\vee 
(\chi_2\vee\dots\vee\chi_n).
$$
Therefore, from statement~(\ref{part 1}) and statement~(\ref{part 2}),
$$
\{\cK_{E_i}\B_{E_i}^{F_i}\chi_i\}_{i=1}^n,\chi_1\vee\dots\vee\chi_n
\vdash \B_{E_1\cup\dots\cup E_n}^{F_1\cup\dots\cup F_n}(\chi_1\vee \dots\vee\chi_n).
$$
by the laws of propositional reasoning.
\end{proof}

%We conclude this section with an example of a formal proof in our logical system. This example will be used later in the proof of the completeness. 
Our last example rephrases Lemma~\ref{super joint responsibility lemma} into the form which is used in the proof of the completeness.

\begin{lemma}\label{five plus plus}
For any $n\ge 0$, any sets $E_1,\dots,E_n\subseteq C$, and any pairwise disjoint sets $F_1,\dots,F_n\subseteq D$,
$$
\{\cK_{E_i}\B^{F_i}_{E_i}\chi_i\}_{i=1}^n,\K_C(\phi\to\chi_1\vee\dots\vee\chi_n)\vdash\K_C(\phi\to\B_C^D\phi).
$$
\end{lemma}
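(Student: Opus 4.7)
The plan is to derive $\phi\to \B^D_C\phi$ under suitable hypotheses that are all ``$\K_C$-wrappable'' and then invoke Lemma~\ref{super distributivity} to wrap everything in $\K_C$. Write $\chi$ for $\chi_1\vee\dots\vee\chi_n$, $E$ for $E_1\cup\dots\cup E_n\subseteq C$, and $F$ for $F_1\cup\dots\cup F_n\subseteq D$, noting that the $F_i$ are pairwise disjoint as required by Lemma~\ref{super joint responsibility lemma}.

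The key intermediate step is the auxiliary derivation
$$\{\cK_{E_i}\B^{F_i}_{E_i}\chi_i\}_{i=1}^n,\; \phi\to\chi,\; \B^D_C\chi\to(\phi\to\B^D_C\phi)\;\vdash\;\phi\to\B^D_C\phi.$$
Assuming $\phi$: the second premise yields $\chi$; applying Lemma~\ref{super joint responsibility lemma} to the first $n$ premises together with $\chi$ gives $\B^F_E\chi$; the Monotonicity axiom for $\B$ upgrades this to $\B^D_C\chi$; and the third premise then delivers $\phi\to\B^D_C\phi$.

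Applying Lemma~\ref{super distributivity} to this auxiliary derivation produces
$$\{\K_C\cK_{E_i}\B^{F_i}_{E_i}\chi_i\}_{i=1}^n,\;\K_C(\phi\to\chi),\;\K_C(\B^D_C\chi\to(\phi\to\B^D_C\phi))\;\vdash\;\K_C(\phi\to\B^D_C\phi).$$
It remains to recover each of the three $\K_C$-wrapped premises from the assumptions of the lemma. The middle premise is exactly one of those assumptions. For the first premise, Negative Introspection gives $\cK_{E_i}\B^{F_i}_{E_i}\chi_i\to\K_{E_i}\cK_{E_i}\B^{F_i}_{E_i}\chi_i$, and the Monotonicity axiom for $\K$ (since $E_i\subseteq C$) upgrades $\K_{E_i}$ to $\K_C$.

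The step I expect to require the most care is the third premise, $\K_C(\B^D_C\chi\to(\phi\to\B^D_C\phi))$. I would start from the Strict Conditional axiom, which is a theorem: $\vdash \K_C(\phi\to\chi)\to(\B^D_C\chi\to(\phi\to\B^D_C\phi))$. Necessitation followed by the Distributivity axiom yields $\vdash \K_C\K_C(\phi\to\chi)\to\K_C(\B^D_C\chi\to(\phi\to\B^D_C\phi))$, and Positive Introspection (Lemma~\ref{positive introspection lemma}) collapses the doubled modality on the antecedent, giving $\K_C(\phi\to\chi)\vdash \K_C(\B^D_C\chi\to(\phi\to\B^D_C\phi))$. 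Combining the three recoveries by Modus Ponens with the $\K_C$-wrapped derivation completes the proof.
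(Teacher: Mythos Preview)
Your proof is correct and follows essentially the same approach as the paper: derive $\phi\to\B^D_C\phi$ from the $\cK_{E_i}\B^{F_i}_{E_i}\chi_i$ hypotheses via Lemma~\ref{super joint responsibility lemma}, Monotonicity, and the Strict Conditional axiom, then wrap in $\K_C$ via Lemma~\ref{super distributivity} and clean up with Negative Introspection, Monotonicity, and Positive Introspection. The only difference is organizational: the paper keeps $\K_C(\phi\to\chi)$ as a hypothesis in the pre-wrap derivation (invoking Truth and the Strict Conditional axiom there directly), so after wrapping it must collapse $\K_C\K_C(\phi\to\chi)$ via Positive Introspection, whereas you split that hypothesis into the two premises $\phi\to\chi$ and $\B^D_C\chi\to(\phi\to\B^D_C\phi)$ and recover the latter post-wrap using Necessitation, Distributivity, and Positive Introspection---the same ingredients, slightly repackaged.
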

\begin{proof} Let $X=\{\cK_{E_i}\B^{F_i}_{E_i}\chi_i\}_{i=1}^n$.
Then, by Lemma~\ref{super joint responsibility lemma},
$$
X,\chi_1\vee\dots\vee\chi_n\vdash \B_{E_1\cup\dots\cup E_n}^{F_1\cup\dots\cup F_n}(\chi_1\vee\dots\vee\chi_n).
$$
Hence, by the Monotonicity axiom, 
$$
X,\chi_1\vee\dots\vee\chi_n\vdash \B_{C}^D(\chi_1\vee\dots\vee\chi_n).
$$
$$
\hspace{-15mm}\mbox{Thus, }\hspace{5mm}X,\phi, \phi\to\chi_1\vee\dots\vee\chi_n\vdash \B_{C}^D(\chi_1\vee\dots\vee\chi_n)
$$
by the Modus Ponens inference rule.
Hence, by the Truth axiom,
$$
X,\phi, \K_C(\phi\to\chi_1\vee\dots\vee\chi_n)\vdash \B_{C}^D(\chi_1\vee\dots\vee\chi_n).
$$

The following formula is an instance of the Strict Conditional axiom $\K_C(\phi\to\chi_1\vee\dots\vee\chi_n)\to(\B_C^D(\chi_1\vee\dots\vee\chi_n)\to(\phi\to\B_C^D\phi))$. Thus, by the Modus Ponens applied twice,
$$
X,\phi, \K_C(\phi\to\chi_1\vee\dots\vee\chi_n)\vdash \phi\to\B_C^D\phi.
$$
Then, 
$
X,\phi, \K_C(\phi\to\chi_1\vee\dots\vee\chi_n)\vdash \B_C^D\phi
$
by the Modus Ponens. \\ Thus,
$
X, \K_C(\phi\to\chi_1\vee\dots\vee\chi_n)\vdash \phi\to\B_C^D\phi
$
by the deduction lemma.
Hence,
$$
\{\K_C\cK_{E_i}\B_{E_i}^{F_i}\chi_i\}_{i=1}^n, \K_C\K_C(\phi\to\chi_1\vee\dots\vee\chi_n)\vdash \K_C(\phi\to\B_C^D\phi)
$$
by Lemma~\ref{super distributivity} and the definition of set $X$.
Then,
$$
\{\K_{E_i}\cK_{E_i}\B_{E_i}^{F_i}\chi_i\}_{i=1}^n, \K_C\K_C(\phi\to\chi_1\vee\dots\vee\chi_n)\vdash \K_C(\phi\to\B_C^D\phi)
$$
by the Monotonicity axiom, the Modus Ponens inference rule, and the assumption $E_1,\dots,E_n\subseteq C$.
Thus,
$$
\{\cK_{E_i}\B_{E_i}^{F_i}\chi_i\}_{i=1}^n, \K_C\K_C(\phi\to\chi_1\vee\dots\vee\chi_n)\vdash \K_C(\phi\to\B_C^D\phi)
$$
by the definition of modality $\cK$, the Negative Introspection axiom, and the Modus Ponens rule.
Therefore, by Lemma~\ref{positive introspection lemma} and the Modus Ponens inference rule, the statement of the lemma is true. 
\end{proof}

% \section{Soundness}\label{soundness section}

% \begin{lemma}
% $(\alpha,\delta,\omega)\nVdash \B^D_\varnothing\phi$. 
% \end{lemma}
% \begin{proof}
% ... TBD
% \end{proof}

% \begin{lemma}
% If sets $D$ and $E$ are disjoint, $(\alpha,\delta,\omega)\Vdash \cK_C\B_C^D\phi$, $(\alpha,\delta,\omega)\Vdash \cK_C\B_C^E\psi$, and $(\alpha,\delta,\omega)\Vdash \phi\vee\psi$, then $(\alpha,\delta,\omega)\Vdash \B_C^{D\cup E}(\phi\vee\psi)$.
% \end{lemma}

% \begin{proof}
% ...
% \end{proof}

% \begin{lemma}
% If $(\alpha,\delta,\omega)\Vdash \K_C(\phi\to\psi)$, $(\alpha,\delta,\omega)\Vdash \B^D_C\psi$, and $(\alpha,\delta,\omega)\Vdash \phi$, then $(\alpha,\delta,\omega)\Vdash \B^D_C\phi$.
% \end{lemma}

% \begin{proof}
% ...
% \end{proof}

% \begin{lemma}
% If $(\alpha,\delta,\omega)\Vdash \B^D_C\phi$, then $(\alpha,\delta,\omega)\Vdash \K_C(\phi\to\B^D_C\phi)$.
% \end{lemma}
% \begin{proof} 
% ....
% \end{proof}

\section{Soundness}\label{soundness section} 

The soundness of the Truth, the Distributivity, the Negative Introspection, the Monotonicity, and the None to Blame axioms is straightforward. Below we prove the soundness of the  Joint Responsibility, the  Strict Conditional, and the Introspection of Blameworthiness axioms as separate lemmas.

% \begin{theorem}[soundness]\label{soundness theorem}
% If $\vdash\phi$, then $(\alpha,\delta,\omega)\Vdash\phi$ for each play $(\alpha,\delta,\omega)$ of each game.
% \end{theorem}
% \begin{proof}

\begin{lemma}\label{joint responsibility soundness}
If $D\cap F=\varnothing$, $(\alpha,\delta,\omega)\Vdash \cK_C\B_C^D\phi$, $(\alpha,\delta,\omega)\Vdash \cK_E\B_E^F\psi$, and $(\alpha,\delta,\omega)\Vdash \phi\vee\psi$, then $(\alpha,\delta,\omega)\Vdash \B_{C\cup E}^{D\cup F}(\phi\vee\psi)$.
 \end{lemma}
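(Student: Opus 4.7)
The plan is a direct unpacking of Definition~\ref{sat}, using $D\cap F=\varnothing$ to glue together two preventing profiles. Concretely, from $(\alpha,\delta,\omega)\Vdash\cK_C\B_C^D\phi$ I extract, via items (4) and (5) of Definition~\ref{sat}, a witness play $(\alpha_1,\delta_1,\omega_1)\in P$ with $\alpha\sim_C\alpha_1$, together with a profile $s_1\in\Delta^D$ that prevents $\phi$ from any play $\sim_C$-related to $\alpha_1$ with action profile agreeing with $s_1$ on $D$. Symmetrically, from $(\alpha,\delta,\omega)\Vdash\cK_E\B_E^F\psi$ I extract $(\alpha_2,\delta_2,\omega_2)\in P$ with $\alpha\sim_E\alpha_2$ and a profile $s_2\in\Delta^F$ preventing $\psi$ under the analogous conditions.

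Next I combine these into a single profile $s\in\Delta^{D\cup F}$ by setting $s(a)=s_1(a)$ for $a\in D$ and $s(a)=s_2(a)$ for $a\in F$; this is well defined precisely because $D\cap F=\varnothing$. I claim $s$ witnesses $(\alpha,\delta,\omega)\Vdash\B_{C\cup E}^{D\cup F}(\phi\vee\psi)$. The nontrivial conjunct $(\alpha,\delta,\omega)\Vdash\phi\vee\psi$ is one of the hypotheses, so only the preventing condition remains.

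To check it, take any play $(\alpha',\delta',\omega')\in P$ with $\alpha\sim_{C\cup E}\alpha'$ and $s=_{D\cup F}\delta'$. Since $\sim_{C\cup E}\subseteq \sim_C$ (because $C\subseteq C\cup E$), I get $\alpha\sim_C\alpha'$, and combining with $\alpha\sim_C\alpha_1$ via symmetry and transitivity of the equivalence relation $\sim_C$ yields $\alpha_1\sim_C\alpha'$. Moreover $s=_{D\cup F}\delta'$ restricts to $s_1=_D\delta'$. The preventing property of $s_1$ therefore gives $(\alpha',\delta',\omega')\nVdash\phi$. An identical argument with $E$, $\alpha_2$, $s_2$, $F$ yields $(\alpha',\delta',\omega')\nVdash\psi$. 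Hence $(\alpha',\delta',\omega')\nVdash\phi\vee\psi$, completing the verification.

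The proof is essentially bookkeeping: there is no real obstacle beyond making sure the two transitivity steps land on the correct pair and that the disjointness of $D$ and $F$ is invoked where the joint profile is defined. The only subtle point worth highlighting explicitly is that the hypothesis uses $\cK$ (not $\K$), so the witness states $\alpha_1,\alpha_2$ need not equal $\alpha$; this is exactly why transitivity of $\sim_C$ and $\sim_E$ is needed to transport the preventing conditions from $\alpha_i$ to $\alpha'$.
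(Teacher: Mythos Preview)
Your proof is correct and follows essentially the same approach as the paper: extract witness plays and preventing profiles $s_1,s_2$ from the two $\cK$-assumptions, glue them into a single profile $s$ using $D\cap F=\varnothing$, and use transitivity of $\sim_C$ (resp.\ $\sim_E$) to transport the preventing conditions. The only cosmetic difference is that the paper first rewrites the preventing conditions for $s_1,s_2$ relative to $\alpha$ (absorbing the transitivity step up front) before defining $s$, whereas you apply transitivity at the end when verifying the condition for an arbitrary $(\alpha',\delta',\omega')$.
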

\begin{proof}
By Definition~\ref{sat} and the definition of modality $\cK$, assumption $(\alpha,\delta,\omega)\Vdash\cK_C\B^D_C\phi$ implies that there is a play $(\alpha_1,\delta_1,\omega_1)$ such that $\alpha\sim_C\alpha_1$ and $(\alpha_1,\delta_1,\omega_1)\Vdash\B^D_C\phi$. Thus, again by Definition~\ref{sat}, there is an action profile $s_1\in\Delta^D$ such that for each play $(\alpha',\delta',\omega')\in P$, if $\alpha_1\sim_C\alpha'$ and $s_1=_D\delta'$, then $(\alpha',\delta',\omega')\nVdash\phi$. Recall that $\alpha\sim_C\alpha_1$. Thus, for each play $(\alpha',\delta',\omega')\in P$,
\begin{equation}\label{november}
    \alpha\sim_C\alpha'\wedge s_1=_D\delta'\to (\alpha',\delta',\omega')\nVdash\phi.
\end{equation}
Similarly, assumption $(\alpha,\delta,\omega)\Vdash\cK_E\B^F_E\psi$ implies that there is a profile $s_2\in\Delta^F$ such that for each play $(\alpha',\delta',\omega')\in P$,
\begin{equation}\label{december}
    \alpha\sim_E\alpha'\wedge s_2=_F\delta'\to (\alpha',\delta',\omega')\nVdash\psi.
\end{equation}
Let $s\in\Delta^{D\cup F}$ be the action profile:
\begin{equation}\label{january}
s(a)=
\begin{cases}
s_1(a), & \mbox{if } a\in D,\\
s_2(a), & \mbox{if } a\in F.
\end{cases}
\end{equation}
Action profile $s$ is well-defined because $D\cap F=\varnothing$. Statements~(\ref{november}), (\ref{december}), and (\ref{january}) by Definition~\ref{sat} imply that for each play $(\alpha',\delta',\omega')\in P$ if $\alpha\sim_{C\cup E}\alpha'$ and $s=_{D\cup F}\delta'$, then $(\alpha',\delta',\omega')\nVdash\phi\vee\psi$. Recall that $(\alpha,\delta,\omega)\Vdash\phi\vee\psi$. Therefore, $(\alpha,\delta,\omega)\Vdash\B^{D\cup F}_{C\cup E}(\phi\vee\psi)$ by Definition~\ref{sat}.
\end{proof}

\begin{lemma}
If $(\alpha,\delta,\omega)\Vdash \K_C(\phi\to\psi)$, $(\alpha,\delta,\omega)\Vdash \B_C^D\psi$, and $(\alpha,\delta,\omega)\Vdash \phi$, then $(\alpha,\delta,\omega)\Vdash \B_C^D\phi$.
\end{lemma}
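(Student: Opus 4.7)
The plan is to use the same action profile that witnesses $\B_C^D\psi$ as the witness for $\B_C^D\phi$, and to transfer the negation from $\psi$ to $\phi$ at each relevant play using the hypothesis $\K_C(\phi\to\psi)$.

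First I would unpack the assumption $(\alpha,\delta,\omega)\Vdash\B_C^D\psi$ via clause (5) of Definition~\ref{sat}. This gives a profile $s\in\Delta^D$ such that for every play $(\alpha',\delta',\omega')\in P$ with $\alpha\sim_C\alpha'$ and $s=_D\delta'$, we have $(\alpha',\delta',\omega')\nVdash\psi$. The claim will be that this same $s$ witnesses $\B_C^D\phi$ at $(\alpha,\delta,\omega)$.

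Next I would fix an arbitrary play $(\alpha',\delta',\omega')\in P$ with $\alpha\sim_C\alpha'$ and $s=_D\delta'$, and argue $(\alpha',\delta',\omega')\nVdash\phi$. By the first part, $(\alpha',\delta',\omega')\nVdash\psi$. By the assumption $(\alpha,\delta,\omega)\Vdash\K_C(\phi\to\psi)$ together with clause (4) of Definition~\ref{sat} (noting that $\sim_C$ depends only on the initial state and $\alpha\sim_C\alpha'$), we obtain $(\alpha',\delta',\omega')\Vdash\phi\to\psi$. Then clause (3) forces $(\alpha',\delta',\omega')\nVdash\phi$, as required. Together with the remaining hypothesis $(\alpha,\delta,\omega)\Vdash\phi$, clause (5) of Definition~\ref{sat} yields $(\alpha,\delta,\omega)\Vdash\B_C^D\phi$.

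The argument is essentially a one-step contrapositive propagation through the knowledge modality, so there is no real obstacle; the only point that requires care is observing that the witnessing profile $s$ for $\psi$ can be reused verbatim for $\phi$ because the universal quantification over $(\alpha',\delta',\omega')$ in clause (5) ranges over exactly the same set of plays in both cases, and the indistinguishability condition $\alpha\sim_C\alpha'$ is precisely what makes $\K_C(\phi\to\psi)$ available at those plays.
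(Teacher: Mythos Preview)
Your proof is correct and essentially identical to the paper's: both unpack $\B_C^D\psi$ to obtain a witness profile $s$, use $\K_C(\phi\to\psi)$ to transfer $\nVdash\psi$ to $\nVdash\phi$ at every play with $\alpha\sim_C\alpha'$ and $s=_D\delta'$, and then invoke the hypothesis $(\alpha,\delta,\omega)\Vdash\phi$ to close with Definition~\ref{sat}(5). The only difference is the order in which the two assumptions are unpacked, which is immaterial.
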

\begin{proof}
By Definition~\ref{sat}, assumption $(\alpha,\delta,\omega)\Vdash \K_C(\phi\to\psi)$ implies that for each play $(\alpha',\delta',\omega')\in P$ of the game if $\alpha\sim_C\alpha'$, then
$(\alpha',\delta',\omega')\Vdash\phi\to\psi$. 

By Definition~\ref{sat}, assumption $(\alpha,\delta,\omega)\Vdash \B^D_C\psi$ implies that there is an action profile $s\in \Delta^D$ such that for each play $(\alpha',\delta',\omega')\in P$, if $\alpha\sim_C\alpha'$ and $s=_D\delta'$, then $(\alpha',\delta',\omega')\nVdash\psi$. 

Hence, for each play $(\alpha',\delta',\omega')\in P$, if $\alpha\sim_C\alpha'$ and $s=_D\delta'$, then $(\alpha',\delta',\omega')\nVdash\phi$.
Therefore, $(\alpha,\delta,\omega)\Vdash \B^D_C\phi$ by Definition~\ref{sat} and the assumption $(\alpha,\delta,\omega)\Vdash \phi$ of the lemma.
\end{proof}

\begin{lemma}
If $(\alpha,\delta,\omega)\Vdash \B^D_C\phi$, then $(\alpha,\delta,\omega)\Vdash \K_C(\phi\to\B^D_C\phi)$.
\end{lemma}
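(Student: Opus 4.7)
The plan is to unfold both the hypothesis and the conclusion using item (5) of Definition~\ref{sat} and then exploit the fact that $\sim_C$ is an equivalence relation (it is the intersection over $a\in C$ of the equivalence relations $\sim_a$). In particular, transitivity of $\sim_C$ is the key observation: whatever strategy witnesses blameworthiness at $\alpha$ will continue to serve as a witness at any $\alpha_1$ with $\alpha\sim_C\alpha_1$.

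In detail, first I would unpack the hypothesis $(\alpha,\delta,\omega)\Vdash \B^D_C\phi$ to extract an action profile $s\in\Delta^D$ such that for every play $(\alpha',\delta',\omega')\in P$, if $\alpha\sim_C\alpha'$ and $s=_D\delta'$, then $(\alpha',\delta',\omega')\nVdash\phi$. Next I would unfold the goal: to show $(\alpha,\delta,\omega)\Vdash \K_C(\phi\to\B^D_C\phi)$, I pick an arbitrary play $(\alpha_1,\delta_1,\omega_1)\in P$ with $\alpha\sim_C\alpha_1$, assume $(\alpha_1,\delta_1,\omega_1)\Vdash\phi$, and aim to conclude $(\alpha_1,\delta_1,\omega_1)\Vdash\B^D_C\phi$.

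The natural candidate witness for blameworthiness at $(\alpha_1,\delta_1,\omega_1)$ is the very same $s$. To verify it, take any play $(\alpha',\delta',\omega')\in P$ with $\alpha_1\sim_C\alpha'$ and $s=_D\delta'$. By transitivity of $\sim_C$, from $\alpha\sim_C\alpha_1$ and $\alpha_1\sim_C\alpha'$ we obtain $\alpha\sim_C\alpha'$. Therefore, by the property of $s$ established from the hypothesis, $(\alpha',\delta',\omega')\nVdash\phi$. Combined with the assumption $(\alpha_1,\delta_1,\omega_1)\Vdash\phi$, this gives $(\alpha_1,\delta_1,\omega_1)\Vdash\B^D_C\phi$ by Definition~\ref{sat}, which closes the argument.

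There is no real obstacle here; the only subtlety worth being explicit about is that $\sim_C$ is an equivalence relation (in particular transitive and symmetric), since it is defined as $\omega\sim_C\omega'$ iff $\omega\sim_a\omega'$ for every $a\in C$, and each $\sim_a$ is an equivalence relation by Definition~\ref{game definition}. I would state this once at the start of the proof and then apply transitivity in the single place where it is needed.
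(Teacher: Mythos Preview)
Your proposal is correct and follows essentially the same route as the paper's own proof: extract the witnessing profile $s$ from the hypothesis, fix an arbitrary $C$-indistinguishable play satisfying $\phi$, reuse $s$ as the witness there, and invoke transitivity of $\sim_C$ to pull back to the original $\alpha$. The only cosmetic difference is variable naming (the paper uses $(\alpha',\delta',\omega')$ and $(\alpha'',\delta'',\omega'')$ where you use $(\alpha_1,\delta_1,\omega_1)$ and $(\alpha',\delta',\omega')$).
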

\begin{proof} 
By Definition~\ref{sat}, assumption $(\alpha,\delta,\omega)\Vdash \B^D_C\phi$ implies that there is an action profile $s\in \Delta^D$ such that for each play $(\alpha',\delta',\omega')\in P$, if $\alpha\sim_C\alpha'$ and $s=_D\delta'$, then $(\alpha',\delta',\omega')\nVdash\phi$. 

Let $(\alpha',\delta',\omega')\in P$ be a play where $\alpha\sim_C\alpha'$ and $(\alpha',\delta',\omega')\Vdash \phi$. By Definition~\ref{sat}, it suffices to show that $(\alpha',\delta',\omega')\Vdash \B^D_C\phi$. 

Consider any play $(\alpha'',\delta'',\omega'')\in P$ such that $\alpha'\sim_C\alpha''$ and $s=_D\delta''$.  
Then, since $\sim_C$ is an equivalence relation, assumptions $\alpha\sim_C\alpha'$ and $\alpha'\sim_C\alpha''$ imply $\alpha\sim_C\alpha''$. Thus, $(\alpha'',\delta'',\omega'')\nVdash\phi$ by the choice of action profile $s$. Therefore, $(\alpha',\delta',\omega')\Vdash \B^D_C\phi$ by Definition~\ref{sat} and the assumption $(\alpha',\delta',\omega')\Vdash \phi$.
\end{proof}

\section{Completeness}\label{completeness section}

The standard proof of the completeness for individual knowledge modality $\K_a$ defines states as maximal consistent sets~\cite{fhmv95}. Two such sets are indistinguishable to an agent $a$ if these sets have the same $\K_a$-formulae. This construction does {\em not} work for distributed knowledge because if two sets share $\K_a$-formulae and $\K_b$-formulae, they do not necessarily have to share $\K_{a,b}$-formulae. To overcome this issue, we use the Tree of Knowledge construction, similar to the one in~\cite{nt18ai}. An important change to this construction proposed in the current paper is placing elements of a set $\mathcal{B}$ on the edges of the tree. This change is significant for the proof of Lemma~\ref{B child exists lemma}.

Let $\mathcal{B}$ be  an arbitrary set of cardinality larger than that of the set  $\mathcal{A}$.
Next, 
for each maximal consistent set of formulae $X_0$,
we define the canonical game $G(X_0)=\left(I,\{\sim_a\}_{a\in\mathcal{A}},\Delta,\Omega,P,\pi\right)$. 

\begin{definition}\label{canonical outcome}
The set of outcomes $\Omega$ consists of all sequences $X_0,(C_1,b_1),X_1,(C_2,b_2),\dots,(C_n,b_n),X_n$, where $n\ge 0$ and for each $i\ge 1$, $X_i$ is a maximal consistent subset of $\Phi$, (i) $C_i\subseteq\mathcal{A}$,  (ii) $b_i\in\mathcal{B}$, and (iii) $\{\phi\;|\;\K_{C_i}\phi\in X_{i-1}\}\subseteq X_i$. 
\end{definition}

If $x$ is a nonempty sequence $x_1,\dots,x_n$ and $y$ is an element, then by $x::y$ and $hd(x)$ we mean sequence $x_1,\dots,x_n,y$ and element $x_n$ respectively. 

We say that outcomes $w,u\in\Omega$ are {\em adjacent} if there are coalition $C$, element $b\in\mathcal{B}$, and maximal consistent set $X$ such that $w=u::(C,b)::X$. The adjacency relation forms a tree structure on set $\Omega$, see Figure~\ref{tree figure}. 
We call it {\em the Tree of Knowledge}. We say that edge $(w,u)$ is {\em labeled} with each agent in coalition $C$ and is {\em marked} with element $b$. Although vertices of the tree are sequences, it is convenient to think about the maximal consistent set $hd(\omega)$, not a sequence $\omega$, being a vertex of the tree.

\begin{figure}[ht]
\begin{center}
\vspace{0mm}
\scalebox{0.55}{\includegraphics{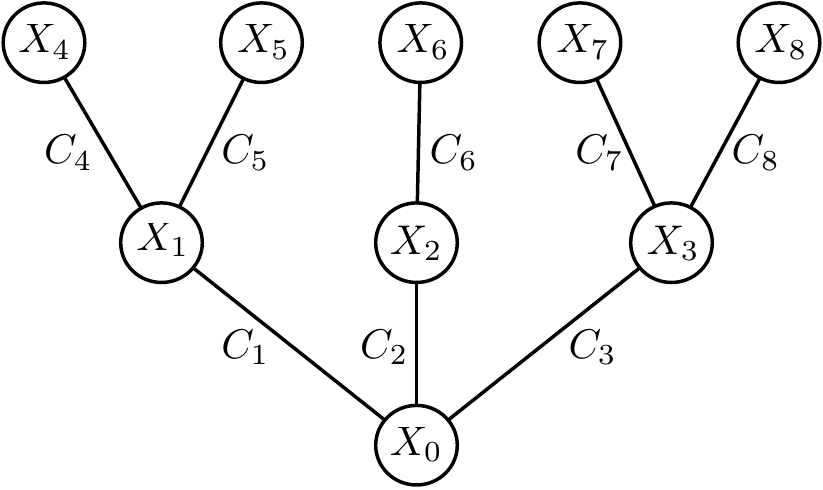}}
\caption{A Fragment of the Tree of Knowledge.}\label{tree figure}
\vspace{-4mm}
\end{center}
\vspace{0mm}
\end{figure}

\begin{definition}
For any outcome $\omega\in\Omega$, let $Tree(\omega)$ be the set of all $\omega'\in\Omega$ such that sequence $\omega$ is a prefix of sequence $\omega'$.
\end{definition}
Note that $Tree(\omega)$ is a subtree of the Tree of Knowledge rooted at vertex $\omega$, see Figure~\ref{tree figure}.

\begin{definition}\label{canonical sim}
For any two outcomes $\omega,\omega'\in\Omega$ and any agent $a\in\mathcal{A}$, let $\omega\sim_a\omega'$ if all edges along the unique path between nodes $\omega$ and $\omega'$ are labeled with agent $a$.
\end{definition}
\begin{lemma}\label{canonical sim is equivalence relation}
Relation $\sim_a$ is an equivalence relation on $\Omega$.% for each agent $a\in\mathcal{A}$.
\qed
\end{lemma}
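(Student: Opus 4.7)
The plan is to verify the three defining properties of an equivalence relation in turn, taking advantage of the tree structure of $\Omega$ guaranteed by the adjacency construction. Reflexivity and symmetry will be essentially immediate from the definition of $\sim_a$, so the real content of the lemma is transitivity, where I will need to reason carefully about the unique paths in the Tree of Knowledge.

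For reflexivity, note that $\omega \sim_a \omega$ holds for every $\omega \in \Omega$ because the unique path between $\omega$ and itself is trivial (it contains no edges), so the universal condition that ``every edge on the path is labeled with $a$'' is vacuously satisfied. For symmetry, observe that the unique path between $\omega$ and $\omega'$ in the tree, read in reverse, is precisely the unique path between $\omega'$ and $\omega$; the set of edges is identical, so if all edges are labeled with $a$ in one direction, the same holds in the other.

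The main work is transitivity. Suppose $\omega \sim_a \omega'$ and $\omega' \sim_a \omega''$; I must show that every edge of the unique $\omega$-to-$\omega''$ path in the tree is labeled with $a$. The forward-looking approach is to concatenate the path $\pi_1$ from $\omega$ to $\omega'$ with the path $\pi_2$ from $\omega'$ to $\omega''$, obtaining a walk $W$ from $\omega$ to $\omega''$ all of whose edges are labeled with $a$ by hypothesis. In a tree there is exactly one simple path between any two nodes, and this path is obtained from $W$ by iteratively cancelling any pair of consecutive steps of the form (traverse edge $e$, then traverse edge $e$ back). In particular, the edge set of the unique simple $\omega$-to-$\omega''$ path is a subset of the edge set of $W$, hence is contained in $\pi_1 \cup \pi_2$. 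Since every edge in $\pi_1 \cup \pi_2$ is labeled with $a$, the same holds for the $\omega$-to-$\omega''$ path, and thus $\omega \sim_a \omega''$ by Definition~\ref{canonical sim}.

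The subtle point in the transitivity argument, and the only place that really uses that $\Omega$ with adjacency forms a \emph{tree} rather than an arbitrary graph, is the uniqueness and cancellation argument: in a graph with cycles, the shortcut from $\omega$ to $\omega''$ could traverse edges not appearing in $\pi_1$ or $\pi_2$. I expect this to be the step requiring the most care, but given the construction of $\Omega$ as rooted sequences under the prefix/extension relation, acyclicity is straightforward. The proof should therefore be brief, with the tree structure doing most of the work.
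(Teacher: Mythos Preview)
Your argument is correct: reflexivity and symmetry are immediate, and your transitivity argument via walk concatenation and backtrack cancellation in a tree is sound. The paper itself gives no proof of this lemma at all---it is stated with a bare \qed---so your write-up is strictly more detailed than the original; the authors evidently regard the result as an immediate consequence of the tree structure on $\Omega$.
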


\begin{lemma}\label{transport lemma}
$\K_C\phi\in hd(\omega)$ iff $\K_C\phi\in hd(\omega')$,
if $\omega\sim_C \omega'$.
%for each $\phi\in\Phi$.
\end{lemma}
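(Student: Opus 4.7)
The plan is to induct on the length of the unique path between $\omega$ and $\omega'$ in the Tree of Knowledge. The base case is trivial ($\omega=\omega'$ gives $hd(\omega)=hd(\omega')$), and since $\sim_C$ is an equivalence relation by Lemma~\ref{canonical sim is equivalence relation}, the inductive step reduces to the case where $\omega$ and $\omega'$ are adjacent. Without loss of generality I can take $\omega'=\omega::(C',b)::X$ with $X=hd(\omega')$, and the edge label $C'$ must contain $C$, because $\omega\sim_C\omega'$ forces every agent of $C$ to label this single edge on the path. By Definition~\ref{canonical outcome}, $\{\psi\mid \K_{C'}\psi\in hd(\omega)\}\subseteq X$.

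For the forward direction, suppose $\K_C\phi\in hd(\omega)$. By Lemma~\ref{positive introspection lemma} (Positive Introspection), $\K_C\K_C\phi\in hd(\omega)$, and by the Monotonicity axiom together with $C\subseteq C'$ this upgrades to $\K_{C'}\K_C\phi\in hd(\omega)$. The inclusion supplied by Definition~\ref{canonical outcome} then yields $\K_C\phi\in X=hd(\omega')$.

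For the reverse direction, suppose toward contradiction that $\K_C\phi\in hd(\omega')$ but $\K_C\phi\notin hd(\omega)$. Maximality of $hd(\omega)$ gives $\neg\K_C\phi\in hd(\omega)$, so the Negative Introspection axiom yields $\K_C\neg\K_C\phi\in hd(\omega)$, and the Monotonicity axiom lifts this to $\K_{C'}\neg\K_C\phi\in hd(\omega)$. By Definition~\ref{canonical outcome} we obtain $\neg\K_C\phi\in hd(\omega')$, contradicting the consistency of $hd(\omega')$.

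I do not expect any real obstacle: the whole purpose of placing coalitions on the edges of the Tree of Knowledge, combined with the S5 flavour of $\K_C$, is exactly to make this kind of transport lemma routine. The only thing worth pausing over is the observation that an edge on a $\sim_C$-path is labeled with a \emph{superset} $C'\supseteq C$, which is precisely what lets Positive Introspection (for the forward direction) and Negative Introspection (for the reverse direction) combine with Monotonicity to shuttle $\K_C$-formulae across the edge.
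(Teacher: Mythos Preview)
Your proof is correct and follows essentially the same approach as the paper's: reduce to the adjacent case, observe that the edge label must be a superset $C'\supseteq C$, and then use Positive Introspection plus Monotonicity for the forward direction and Negative Introspection plus Monotonicity for the reverse direction, invoking Definition~\ref{canonical outcome} to cross the edge. The only cosmetic differences are that the paper phrases the reverse direction as a direct contrapositive rather than by contradiction, and justifies the reduction to adjacent vertices by appealing directly to Definition~\ref{canonical sim} (every edge on the path is labeled with all agents in $C$) rather than to Lemma~\ref{canonical sim is equivalence relation}.
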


\begin{proof}
By Definition~\ref{canonical sim}, assumption $\omega\sim_C \omega'$ implies that all edges along the unique path between nodes $\omega$ and $\omega'$ are labeled with all agents of coalition $C$. Thus, it suffices to prove the statement of the lemma for any two adjacent vertices along this path. Let $\omega'=\omega::(D,b)::X$. Note that $C\subseteq D$ because edge $(\omega,\omega')$ is labeled with all agents in coalition $C$. We start by proving the first part of the lemma.

\noindent$(\Rightarrow)$ Suppose $\K_C\phi\in hd(\omega)$. Thus, $hd(\omega)\vdash \K_C\K_C\phi$ by Lemma~\ref{positive introspection lemma}. Hence, $hd(\omega)\vdash \K_D\K_C\phi$ by the Monotonicity axiom. Thus, $\K_D\K_C\phi\in hd(\omega)$ because set $hd(\omega)$ is maximal. Therefore, $\K_C\phi\in X=hd(\omega')$ by Definition~\ref{canonical outcome}.

\noindent$(\Leftarrow)$ Assume $\K_C\phi\notin hd(\omega)$. Thus, $\neg\K_C\phi\in hd(\omega)$ by the maximality of the set $hd(\omega)$. Hence, $hd(\omega)\vdash \K_C\neg\K_C\phi$ by the Negative Introspection axiom. Then, $hd(\omega)\vdash \K_D\neg\K_C\phi$ by the Monotonicity axiom. Thus, $\K_D\neg\K_C\phi\in hd(\omega)$ by the maximality of set $hd(\omega)$. Then, $\neg\K_C\phi\in X=hd(\omega')$ by Definition~\ref{canonical outcome}. Therefore, $\K_C\phi\notin hd(\omega')$ because set $hd(\omega')$ is consistent.
\end{proof}

\begin{corollary}\label{diamond corollary}
If $\omega\sim_C \omega'$, then
$\cK_C\phi\in hd(\omega)$ iff $\cK_C\phi\in hd(\omega')$.
\end{corollary}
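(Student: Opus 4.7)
The plan is to derive the corollary directly from Lemma~\ref{transport lemma} by unfolding the definition of the dual modality and using the maximality and consistency of the sets $hd(\omega)$ and $hd(\omega')$. Recall that $\cK_C\phi$ is defined as $\neg\K_C\neg\phi$, so the statement to establish is that under the assumption $\omega\sim_C\omega'$, we have $\neg\K_C\neg\phi\in hd(\omega)$ iff $\neg\K_C\neg\phi\in hd(\omega')$.

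First I would observe that since $hd(\omega)$ is a maximal consistent set of formulae, the formula $\neg\K_C\neg\phi$ belongs to $hd(\omega)$ if and only if $\K_C\neg\phi$ does not belong to $hd(\omega)$; and the analogous equivalence holds for $hd(\omega')$. Then I would apply Lemma~\ref{transport lemma} with the formula $\neg\phi$ in place of $\phi$, yielding that $\K_C\neg\phi\in hd(\omega)$ iff $\K_C\neg\phi\in hd(\omega')$ whenever $\omega\sim_C\omega'$. Taking contrapositives of both sides and chaining the three equivalences closes the argument.

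There is no real obstacle here: the corollary is purely a bookkeeping consequence of Lemma~\ref{transport lemma} together with the standard behaviour of maximal consistent sets with respect to Boolean connectives, and it requires none of the axioms of the system beyond what has already been invoked to prove Lemma~\ref{transport lemma}. The whole proof therefore fits comfortably in two or three lines.
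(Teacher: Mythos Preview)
Your proposal is correct and matches the paper's intent: the corollary is stated without proof in the paper, precisely because it is the immediate consequence of Lemma~\ref{transport lemma} applied to $\neg\phi$ together with the maximality and consistency of $hd(\omega)$ and $hd(\omega')$, exactly as you describe.
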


The set of the initial states $I$ of the canonical game is the set of all equivalence classes of $\Omega$ with respect to relation $\sim_\mathcal{A}$.

\begin{definition}\label{canonical initial state}
$I=\Omega/\sim_\mathcal{A}$.
\end{definition}

\begin{lemma}\label{well-defined lemma}
Relation $\sim_C$ is well-defined on set $I$.
\end{lemma}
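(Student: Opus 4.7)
The plan is to unpack what well-definedness means for a relation on the quotient $I=\Omega/{\sim_\mathcal{A}}$ and then reduce it to a simple monotonicity fact. Elements of $I$ are $\sim_\mathcal{A}$-equivalence classes, so to show that $\sim_C$ is well-defined on $I$ via the rule $[\omega_1]\sim_C[\omega_2]$ iff $\omega_1\sim_C\omega_2$, I would verify the following substitution property: for all $\omega_1,\omega_1',\omega_2,\omega_2'\in\Omega$, if $\omega_1\sim_\mathcal{A}\omega_1'$ and $\omega_2\sim_\mathcal{A}\omega_2'$, then $\omega_1\sim_C\omega_2$ iff $\omega_1'\sim_C\omega_2'$.

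The key observation is monotonicity of the coalitional indistinguishability in the coalition parameter. By the convention stated before Definition~\ref{sat}, $\omega\sim_D\omega'$ means $\omega\sim_a\omega'$ for every $a\in D$, so whenever $D\subseteq D'$ one has $\sim_{D'}\,\subseteq\,\sim_D$ as binary relations on $\Omega$. In particular, since $C\subseteq\mathcal{A}$, the relation $\sim_\mathcal{A}$ refines $\sim_C$, i.e.\ $\omega\sim_\mathcal{A}\omega'$ implies $\omega\sim_C\omega'$.

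With this refinement in hand, the well-definedness is immediate from the equivalence-relation properties of $\sim_C$ established in Lemma~\ref{canonical sim is equivalence relation}. Assuming $\omega_1\sim_C\omega_2$ together with the two $\sim_\mathcal{A}$-links, I would use the refinement to upgrade them to $\omega_1'\sim_C\omega_1$ and $\omega_2\sim_C\omega_2'$, and then chain $\omega_1'\sim_C\omega_1\sim_C\omega_2\sim_C\omega_2'$ by symmetry and transitivity of $\sim_C$ to obtain $\omega_1'\sim_C\omega_2'$; the converse direction is symmetric. There is essentially no main obstacle here: the argument is a routine quotient-well-definedness check, because the tree-level work was already carried out in Lemma~\ref{canonical sim is equivalence relation}, and the only new ingredient is the coalition-monotonicity $\sim_{\mathcal{A}}\,\subseteq\,\sim_C$.
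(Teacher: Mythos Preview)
Your proposal is correct and follows essentially the same approach as the paper: both use the monotonicity $\sim_{\mathcal{A}}\subseteq\sim_C$ (since $C\subseteq\mathcal{A}$) to convert the $\sim_{\mathcal{A}}$-links into $\sim_C$-links, and then apply symmetry and transitivity of $\sim_C$ to chain them together. The paper's proof is slightly terser, showing only one direction and leaving the symmetric converse implicit, but the argument is the same.
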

\begin{proof}
Consider outcomes $\omega_1,\omega_2,\omega'_1$, and $\omega'_2$ where $\omega_1\sim_C\omega_2$, $\omega_1\sim_\mathcal{A}\omega'_1$, and $\omega_2\sim_\mathcal{A}\omega'_2$. It suffices to show  $\omega'_1\sim_C\omega'_2$. Indeed, the assumptions $\omega_1\sim_\mathcal{A}\omega'_1$ and $\omega_2\sim_\mathcal{A}\omega'_2$ imply $\omega_1\sim_C\omega'_1$ and $\omega_2\sim_C\omega'_2$. Thus, $\omega'_1\sim_C\omega'_2$ because $\sim_C$ is an equivalence relation.
\end{proof}

\begin{corollary}\label{alpha iff omega}
$\alpha\sim_C\alpha'$ iff $\omega\sim_C\omega'$, for any states $\alpha,\alpha'\in I$, any outcomes $\omega\in\alpha$ and $\omega'\in\alpha'$, and any $C\subseteq\mathcal{A}$.
\end{corollary}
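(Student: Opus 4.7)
The plan is to unpack the definition of $\sim_C$ on $I$ and then invoke Lemma~\ref{well-defined lemma} (well-definedness) together with the basic fact that $\sim_\mathcal{A}$ refines $\sim_C$ for any $C \subseteq \mathcal{A}$. By Definition~\ref{canonical initial state}, each $\alpha \in I$ is an equivalence class of outcomes under $\sim_\mathcal{A}$, and the relation $\sim_C$ on $I$ is the natural lift: $\alpha \sim_C \alpha'$ holds precisely when some (equivalently, by Lemma~\ref{well-defined lemma}, every) pair of representatives $\omega_0 \in \alpha$, $\omega_0' \in \alpha'$ satisfies $\omega_0 \sim_C \omega_0'$.

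For the $(\Leftarrow)$ direction, suppose $\omega \sim_C \omega'$ for some $\omega \in \alpha$ and $\omega' \in \alpha'$. Then the existence of such a pair is exactly the definition of $\alpha \sim_C \alpha'$ as a relation on $I$, so the conclusion is immediate.

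For the $(\Rightarrow)$ direction, fix arbitrary $\omega \in \alpha$ and $\omega' \in \alpha'$. The assumption $\alpha \sim_C \alpha'$ supplies representatives $\omega_0 \in \alpha$ and $\omega_0' \in \alpha'$ with $\omega_0 \sim_C \omega_0'$. Since $\omega, \omega_0$ both lie in the $\sim_\mathcal{A}$-class $\alpha$, we have $\omega \sim_\mathcal{A} \omega_0$, and similarly $\omega' \sim_\mathcal{A} \omega_0'$. Applying Lemma~\ref{well-defined lemma} to the quadruple $(\omega_0,\omega_0',\omega,\omega')$ yields $\omega \sim_C \omega'$, as required.

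There is no real obstacle here; the content of the corollary is entirely absorbed by Lemma~\ref{well-defined lemma}, and the proof is a short bookkeeping argument about lifting an equivalence relation to equivalence classes of a finer relation. The only thing to be careful about is explicitly invoking $C \subseteq \mathcal{A}$ (so that $\sim_\mathcal{A}$ implies $\sim_C$), which is what permits the transport of representatives between $\sim_\mathcal{A}$-equivalent outcomes.
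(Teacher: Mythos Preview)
Your proof is correct and matches the paper's intent: the paper states this result as an immediate corollary of Lemma~\ref{well-defined lemma} with no separate proof, and your argument is precisely the unpacking of that well-definedness lemma into the ``some representative iff every representative'' form. There is nothing to add.
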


In~\cite{nt19aaai}, the domain of actions $\Delta$ of the canonical game is the set $\Phi$ of all formulae. Informally, if an agent employs action $\phi$, then she {\em vetoes} formula $\phi$. The set $P$ specifies under which conditions the veto takes place. Here, we modify this construction by requiring the agent, while vetoing formula $\phi$, to specify a coalition $C$ and an outcome $\omega$. The veto will take effect only if coalition $C$ cannot distinguish the outcome $\omega$ from the current outcome. One can think about this construction as requiring the veto ballot to be signed by a key only known, distributively, to coalition $C$. This way only coalition $C$ knows how the agent must vote. %This ``signed by a key known to $C$'' part first appeared in~\cite{nt18aamas}.

\begin{definition}
$\Delta =\{(\phi,C,\omega)\;|\;\phi\in\Phi, C\subseteq\mathcal{A}, \omega\in \Omega\}.$
\end{definition}

\begin{definition}\label{canonical play}
The set $P\subseteq I\times \Delta^\mathcal{A}\times \Omega$ consists of all triples $(\alpha,\delta,u)$ such that (i) $u\in\alpha$, and (ii)
    for any outcome $v$ and any formula $\cK_C\B^D_C\psi\in hd(v)$, if $\delta(a)=(\psi,C,v)$ for each agent $a\in D$ and $u\sim_C v$, then $\neg\psi\in hd(u)$.
\end{definition}

\begin{definition}\label{canonical pi}
$\pi(p)=\{(\alpha,\delta,\omega)\in P\;|\; p\in hd(\omega)\}$.
\end{definition}

This concludes the definition of the canonical game $G(X_0)$. In Lemma~\ref{termination lemma}, we show that this game satisfies the requirement of item (5) from Definition~\ref{game definition}. Namely, for each $\alpha\in I$ and each complete action profile $\delta\in\Delta^\mathcal{A}$, there is at least one $\omega\in \Omega$ such that $(\alpha,\delta,\omega)\in P$.

As usual, the completeness follows from the induction (or ``truth'') Lemma~\ref{induction lemma}. To prove this lemma we first need to establish a few auxiliary properties of game $G(X_0)$.

\begin{lemma}\label{B child exists lemma}
For any play $(\alpha,\delta,\omega)\in P$ of  game $G(X_0)$, any formula $\neg(\phi\to \B_C^D\phi)\in hd(\omega)$, and any profile $s\in\Delta^D$, there is a play $(\alpha',\delta',\omega')\in P$ such that $\alpha\sim_C\alpha'$, $s =_D\delta'$, and $\phi\in hd(\omega')$.
\end{lemma}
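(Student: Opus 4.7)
The plan is to construct $\omega'$ as a fresh $C$-labeled child of $\omega$ in the Tree of Knowledge, namely $\omega' := \omega :: (C, b) :: Y$ for some marker $b \in \mathcal{B}$ and maximal consistent set $Y$ with $\phi \in Y$, then set $\alpha' := [\omega']_{\sim_\mathcal{A}}$, $\delta'(a) := s(a)$ for $a \in D$, and $\delta'(a) := (\bot, \mathcal{A}, \omega)$ for $a \notin D$. Corollary~\ref{alpha iff omega} then yields $\alpha \sim_C \alpha'$, while $s =_D \delta'$ and $\phi \in hd(\omega')$ hold by construction, so everything reduces to choosing $Y$ (and $b$) to ensure $(\alpha', \delta', \omega') \in P$.

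The crux is producing a suitable $Y$. I would partition $D$ into the equivalence classes $F_1, \ldots, F_k$ of the relation ``$s(a_1) = s(a_2)$'', so that on each class $s$ takes a constant value $(\psi_i, E_i, v_i)$, and call class $F_i$ \emph{active} when $E_i \subseteq C$, $\omega \sim_{E_i} v_i$, and $\cK_{E_i}\B_{E_i}^{F_i}\psi_i \in hd(v_i)$. Then consider
\[
\Gamma := \{\phi\} \cup \{\chi : \K_C\chi \in hd(\omega)\} \cup \{\neg\psi_i : F_i \text{ is active}\}.
\]
To show $\Gamma$ is consistent, I would assume the contrary: finitely many $\chi_1, \ldots, \chi_n$ with $\K_C\chi_j \in hd(\omega)$ and active classes $F_{i_1}, \ldots, F_{i_m}$ would satisfy $\chi_1 \wedge \dots \wedge \chi_n \vdash \phi \to \psi_{i_1} \vee \dots \vee \psi_{i_m}$, giving $\K_C(\phi \to \psi_{i_1} \vee \dots \vee \psi_{i_m}) \in hd(\omega)$ via Lemma~\ref{super distributivity}; the $F_{i_j}$'s are pairwise disjoint subsets of $D$ with $E_{i_j} \subseteq C$ and, by Corollary~\ref{diamond corollary}, each $\cK_{E_{i_j}}\B_{E_{i_j}}^{F_{i_j}}\psi_{i_j}$ also lies in $hd(\omega)$, so Lemma~\ref{five plus plus} yields $\K_C(\phi \to \B_C^D\phi) \in hd(\omega)$, which together with the Truth axiom contradicts the assumption $\neg(\phi \to \B_C^D\phi) \in hd(\omega)$.

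Extending $\Gamma$ to a maximal consistent $Y$ and choosing $b \in \mathcal{B}$ distinct from every marker occurring in $v_1, \ldots, v_k$ (possible since $|\mathcal{B}| > |\mathcal{A}|$) guarantees that no $v_i$ lies in the subtree rooted at $\omega'$. Clause (ii) of Definition~\ref{canonical play} is then verified by cases on a candidate matched coalition $D''$: when $D'' \not\subseteq D$, the default action at any agent in $D'' \setminus D$ forces the candidate premise into the form $\cK_\mathcal{A}\B_\mathcal{A}^{D''}\bot$, which is refutable via the Truth axiom and Necessitation and so absent from $hd(\omega)$; when $D'' \subseteq D$, the constancy of $s$ on $D''$ places $D''$ inside some $F_i$ and fixes $(\psi, C'', v) = (\psi_i, E_i, v_i)$, and since $v_i$ sits outside $\omega'$'s subtree, $\omega' \sim_{E_i} v_i$ forces $E_i \subseteq C$ and $\omega \sim_{E_i} v_i$, while Monotonicity on the superscript lifts $\cK_{E_i}\B_{E_i}^{D''}\psi_i$ to $\cK_{E_i}\B_{E_i}^{F_i}\psi_i$, making $F_i$ active and placing $\neg\psi_i$ in $Y = hd(\omega')$ as required. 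The main obstacle is the consistency argument for $\Gamma$: one must discover precisely the partition of $D$ by $s$-values in order to obtain the pairwise-disjoint $F_i$'s demanded by Lemma~\ref{five plus plus}.
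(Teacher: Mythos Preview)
Your construction is essentially the paper's: both build $\omega'$ as a fresh $C$-child of $\omega$ whose head is a maximal consistent extension of the same set (your $\Gamma$ and the paper's $X$ coincide, since any $F\subseteq D$ on which $\delta'$ is constant lies in one of your blocks $F_i$, and Monotonicity on the superscript promotes $\cK_E\B_E^{F}\chi$ to $\cK_E\B_E^{F_i}\chi$). Organising the argument via the explicit partition of $D$ by $s$-values is a tidy way to secure the pairwise disjointness that Lemma~\ref{five plus plus} requires; the paper obtains the same by assuming the $\chi_i$ distinct without loss of generality.

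Two small gaps remain. First, your case split on $D''$ omits $D''=\varnothing$: then ``$\delta'(a)=(\psi,C'',v)$ for all $a\in D''$'' is vacuous, $(\psi,C'',v)$ is unconstrained, and ``$D''$ lies inside some $F_i$'' determines nothing; this case is dispatched by the None~to~Act axiom exactly as in the paper's Case~I. Second, your choice of $b$ is not justified by $|\mathcal{B}|>|\mathcal{A}|$ as stated: the total number of markers occurring in $v_1,\dots,v_k$ is bounded only by the sum of their lengths, not by $|\mathcal{A}|$, so with finite $\mathcal{A}$ and $|\mathcal{B}|=|\mathcal{A}|+1$ a $b$ avoiding \emph{all} such markers may not exist. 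The paper instead observes that the subtrees $Tree(\omega::(C,b)::Y)$ for $b\in\mathcal{B}$ are pairwise disjoint while there are at most $k\le|\mathcal{A}|<|\mathcal{B}|$ outcomes $v_i$ to dodge, so some subtree misses them all; equivalently, you only need $b$ to differ from the single marker (if any) sitting at position $|\omega|+1$ in each $v_i$.
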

\begin{proof}
Let the complete action profile $\delta'$ be defined as:
\begin{equation}\label{choice of delta'}
    \delta'(a)=
    \begin{cases}
    s(a), & \mbox{ if } a\in D,\\
    (\bot,\varnothing,\omega), & \mbox{ otherwise}.
    \end{cases}
\end{equation}
Then, $s=_D\delta'$.
Consider the following set of formulae:
\begin{eqnarray*}
X&\!\!=\!\!&\!\{\phi\}\;\cup\;\{\psi\;|\;\K_C\psi\in hd(\omega)\}\\
&&\!\cup\;\{\neg\chi\;|\;\cK_E\B_E^F\chi\in hd(v), E\subseteq C, F\subseteq D, \\
&&\hspace{12mm} \forall a\in F(\delta'(a)=(\chi,E,v)), \omega\sim_E v\}.
\end{eqnarray*}

\begin{claim}
Set $X$ is consistent.
\end{claim}
\begin{proof-of-claim}
Suppose the opposite. Thus, there are formulae $\cK_{E_1}\B_{E_1}^{F_1}\chi_1,\dots,\cK_{E_n}\B_{E_n}^{F_n}\chi_n$, outcomes $v_1,\dots,v_n\in\Omega$, 
\begin{eqnarray}
\mbox{ and formulae }&&\hspace{-5mm}\K_C\psi_1,\dots,\K_C\psi_m\in hd(\omega),\label{choice of psi-s}\\
\mbox{such that }&&\hspace{-5mm}\cK_{E_i}\B_{E_i}^{F_i}\chi_i\in hd(v_i)\;\forall i\le n,\label{choice of chi2-s}\\
&&\hspace{-5mm}E_1,\dots,E_n\subseteq C,\hspace{2mm}F_1,\dots,F_n\subseteq D,\label{choice of Es}\\
&&\hspace{-5mm}\delta'(a)=(\chi_i,E_i,v_i)\;\forall i\le n\;\forall a\in F_i,\label{choice of votes}\\
&&\hspace{-5mm}\omega\sim_{E_i} v_i\;\forall i\le n,\label{choice of omega'2}\\
\mbox{ and }&&\hspace{-5mm}\psi_1,\dots,\psi_m,\neg\chi_1,\dots,\neg\chi_n\vdash\neg\phi.\label{choice of cons}
\end{eqnarray}
Without loss of generality, we assume that formulae $\chi_1,\dots,\chi_n$ are distinct. Thus, assumption~(\ref{choice of votes}) implies that $F_1,\dots,F_n$ are pairwise disjoint. 
Assumption~(\ref{choice of cons}) implies 
$$
\psi_1,\dots,\psi_m\vdash\phi\to\chi_1\vee\dots\vee\chi_n
$$
by the propositional reasoning.
Then, 
$$
\K_C\psi_1,\dots,\K_C\psi_m\vdash\K_C(\phi\to\chi_1\vee\dots\vee\chi_n)
$$
by Lemma~\ref{super distributivity}.
Hence, by assumption~(\ref{choice of psi-s}),
$$
hd(\omega)\vdash\K_C(\phi\to\chi_1\vee\dots\vee\chi_n).
$$

At the same time, $\cK_{E_1}\B^{F_1}_{E_1}\chi_1,\dots, \cK_{E_n}\B^{F_n}_{E_n}\chi_n\in hd(\omega)$ by assumption~(\ref{choice of chi2-s}), assumption~(\ref{choice of omega'2}), and Corollary~\ref{diamond corollary}.
Thus, $hd(\omega)\vdash \K_C(\phi\to\B^D_C\phi)$ by  Lemma~\ref{five plus plus}, assumption~(\ref{choice of Es}), and the assumption that sets $F_1,\dots,F_n$ are pairwise disjoint.
Hence, by the Truth axiom, $hd(\omega)\vdash \phi\to\B^D_C\phi$, which contradicts the assumption $\neg(\phi\to\B^D_C\phi)\in hd(\omega)$ of the lemma because set $hd(\omega)$ is consistent.
Thus, $X$ is consistent.
\end{proof-of-claim}

Let $X'$ be any maximal consistent extension of set $X$ and $\omega'_b$ be the sequence $\omega::(C, b)::X'$ for each element $b\in\mathcal{B}$. 
Then, $\omega'_b\in\Omega$ for each element $b\in\mathcal{B}$ by Definition~\ref{canonical outcome} and the choice of sets $X$ and $X'$. Also $\phi\in X\subseteq hd(\omega'_b)$ for each $b\in\mathcal{B}$ by the choice of sets $X$ and $X'$. 

Note that family $\{Tree(\omega'_b)\}_{b\in \mathcal{B}}$ consists of pair-wise disjoint sets. This family has the same cardinality as set $\mathcal{B}$. Let  
$$
V=\{v\in \Omega\;|\; \delta'(a)=(\psi,E,v), a\in \mathcal{A}, \psi\in\Phi,E\subseteq\mathcal{A}\}.
$$
The cardinality of $V$ is at most the cardinality of set $\mathcal{A}$. By the choice of set $\mathcal{B}$, its  cardinality is larger than the cardinality of set $\mathcal{A}$. Thus, there exists a set $Tree(\omega'_{b_0})$ in family $\{Tree(\omega'_b)\}_{b\in \mathcal{B}}$ disjoint with set $V$:
\begin{equation}\label{disjoint equation}
    Tree(\omega'_{b_0}) \cap V=\varnothing.
\end{equation}
Let $\omega'$ be the outcome $\omega'_{b_0}$.

\begin{figure}[ht]
\begin{center}
\vspace{-2mm}
\scalebox{0.5}{\includegraphics{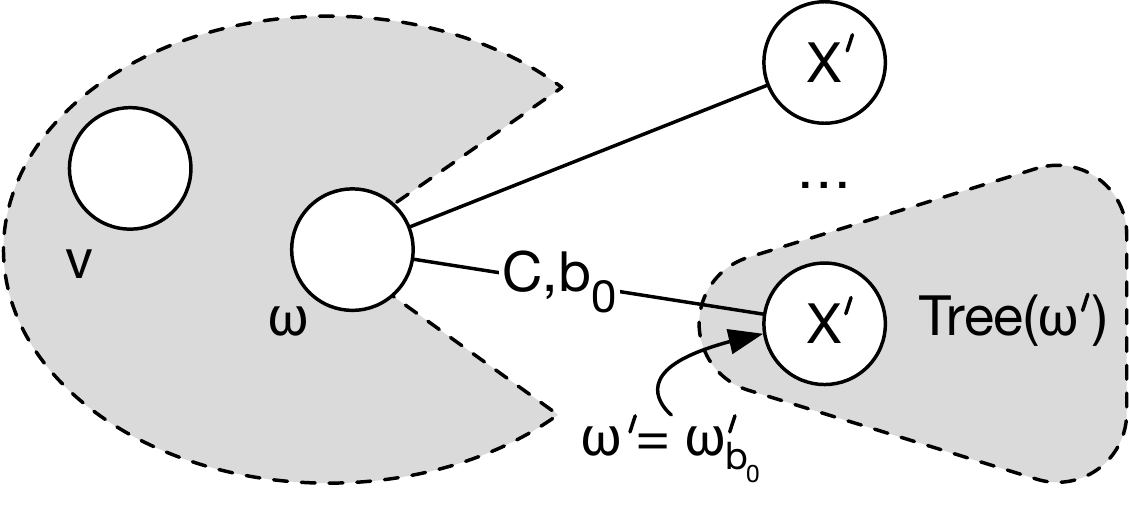}}
%\vspace{-2mm}
%\footnotesize
\caption{Towards the Proof of Claim~\ref{E subseteq C claim}.}\label{signature figure}
\vspace{-4mm}
\end{center}
\vspace{-2mm}
\end{figure}

\begin{claim}\label{E subseteq C claim}
If $\omega' \sim_E v$ for some $v\in V$, then $E\subseteq C$.
\end{claim}
\begin{proof-of-claim}
Consider any agent $a\in E$. By Definition~\ref{canonical sim}, assumption $\omega' \sim_E v$ implies that each edge along the unique path connecting vertex $\omega$ with vertex $v$ is labeled with agent $a$. At the same time, $v\notin Tree(\omega')$ by statement~(\ref{disjoint equation}) and because  $\omega'=\omega'_{b_0}$. Thus, the path between vertex $\omega'$ and vertex $v$  must go through vertex $\omega$, see Figure~\ref{signature figure}. Hence, this path must contain edge $(\omega',\omega)$. Since all edges along this path are labeled with agent $a$ and edge $(\omega',\omega)$ is labeled with agents from set $C$, it follows that $a\in C$.
\end{proof-of-claim}

% \begin{claim}
% If $\omega'_{b_1} \sim_E\omega'_{b_2}$ and $b_1\neq b_2$, then $E\subseteq C$.
% \end{claim}
% \begin{proof-of-claim}
% Consider any agent $a\in E$. Then, assumption $\omega'_{b_1} \sim_E\omega'_{b_2}$ implies that $\omega'_{b_1} \sim_a \omega'_{b_2}$. Thus, $a\in C$ by Definition~\ref{canonical outcome}, Definition~\ref{canonical sim}, the definitions of $\omega'_{b_1}$ and $\omega'_{b_2}$, and the assumption $b_1\neq b_2$. 
% \end{proof-of-claim}

Let initial state $\alpha'$ be the equivalence class of outcome $\omega'$ with respect to the equivalence relation $\sim_{\mathcal{A}}$. Note that $\omega\sim_C\omega'$ by Definition~\ref{canonical outcome} because $\omega'=\omega::(C,b_0)::X'$. Therefore, $\alpha\sim_C\alpha'$ by Corollary~\ref{alpha iff omega}.

\begin{claim}
$(\alpha',\delta',\omega')\in P$.
\end{claim}
\begin{proof-of-claim}
First, note that $\omega'\in\alpha'$ because initial state $\alpha'$ is the equivalence class of outcome $\omega'$. Next, consider an outcome $v\in\Omega$
\begin{eqnarray}
    \mbox{and a formula }&&\hspace{-5mm}\cK_E\B_E^F\chi\in hd(v), \label{K bar}\\
    \mbox{such that }&&\hspace{-5mm}\omega'\sim_E v,\hspace{3mm}\label{omega' E v}\\
    \mbox{and }&&\hspace{-5mm}\forall a\in F\, (\delta'(a)=(\chi,E,v)).\label{delta'}
    \end{eqnarray}
By Definition~\ref{canonical play}, it suffices to show that $\neg\chi\in hd(\omega')$. 

\vspace{1mm}
\noindent{\bf Case I:} $F=\varnothing$.
Then, $\neg\B^F_E\chi$ is an instance of the None to Act axiom. Thus, $\vdash\K_E\neg\B^F_E\chi$ by the Necessitation inference rule. Hence, $\neg\K_E\neg\B^F_E\chi\notin hd(v)$ by the consistency of the set $hd(v)$, which contradicts the assumption~(\ref{K bar}) and the definition of modality $\cK$. 

\vspace{1mm}
\noindent{\bf Case II:} $\varnothing\neq F\subseteq D$. 
Thus, there exists an agent $a\in F$. Note that $\delta'(a) = (\chi, E, v)$ by assumption~(\ref{delta'}). Hence, $v\in V$ by the definition of set $V$. Thus, $E\subseteq C$ by Claim~\ref{E subseteq C claim} and assumption~(\ref{omega' E v}).
Then, $\neg\chi\in X$ by the definition of set $X$, the assumption of the case that $F\subseteq D$, assumption~(\ref{K bar}), assumption~(\ref{omega' E v}), and  assumption (\ref{delta'}). Therefore, $\neg\chi\in hd(\omega')$ because $X\subseteq X'=hd(\omega'_{b_0})=hd(\omega')$ by the choice of set $X'$, set of sequences $\{\omega'_{b}\}_{b\in\mathcal{B}}$, and sequence $\omega'$.  

\vspace{1mm}
\noindent{\bf Case III:} $F\nsubseteq D$. Consider any agent $a\in F\setminus D$. Thus, $\delta'(a)=(\bot,\varnothing,\omega)$ by equation~(\ref{choice of delta'}). Thus, $\chi\equiv\bot$ by statement~(\ref{delta'}) and the assumption $a\in F$. Hence, formula $\neg\chi$ is a tautology. Therefore, $\neg\chi\in hd(\omega')$ by the maximality of set $hd(\omega')$. 
\end{proof-of-claim}
This concludes the proof of the lemma.
\end{proof}

\begin{lemma}\label{delta exists lemma}
For any outcome $\omega\in\Omega$, there is a state $\alpha\in I$ and a complete profile $\delta\in \Delta^\mathcal{A}$ such that $(\alpha,\delta,\omega)\in P$.
\end{lemma}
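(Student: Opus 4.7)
The plan is to exhibit a play by choosing $\alpha$ to be the equivalence class of $\omega$ under $\sim_{\mathcal{A}}$ and $\delta$ to be the ``trivial'' complete profile in which every agent votes the constant $\bot$ using the empty coalition as the signing key and $\omega$ itself as the reference outcome; that is, $\delta(a)=(\bot,\varnothing,\omega)$ for every $a\in\mathcal{A}$. By Definition~\ref{canonical initial state} this choice of $\alpha$ exists and contains $\omega$, so condition (i) of Definition~\ref{canonical play} holds immediately. Everything then reduces to checking condition (ii): for every $v\in\Omega$ and every $\cK_E\B^F_E\psi\in hd(v)$ with $\omega\sim_E v$ and $\delta(a)=(\psi,E,v)$ for every $a\in F$, one has $\neg\psi\in hd(\omega)$.

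I would split on whether $F=\varnothing$. When $F=\varnothing$, the constraint on $\delta$ is vacuously satisfied, but the assumption $\cK_E\B^\varnothing_E\psi\in hd(v)$ is itself inconsistent: by the None to Act axiom $\vdash\neg\B^\varnothing_E\psi$, hence by the Necessitation rule $\vdash\K_E\neg\B^\varnothing_E\psi$, which contradicts $\cK_E\B^\varnothing_E\psi\in hd(v)$ together with the consistency of $hd(v)$. This is exactly the move used in Case~I of the proof of Lemma~\ref{B child exists lemma}, so this subcase is vacuous. When $F\neq\varnothing$, pick any $a\in F$; then $(\psi,E,v)=\delta(a)=(\bot,\varnothing,\omega)$ forces $\psi\equiv\bot$, and since $\neg\bot$ is a propositional tautology it belongs to the maximal consistent set $hd(\omega)$.

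The whole argument is essentially bookkeeping once the right profile is chosen, and the only substantive point is that the None to Act axiom is precisely what rules out the otherwise delicate case $F=\varnothing$. No tree-of-knowledge reasoning is needed, since the $(\bot,\varnothing,\omega)$ votes were engineered so that the only way for the hypothesis of Definition~\ref{canonical play}(ii) to fire nontrivially is already with $\psi\equiv\bot$. I do not anticipate any substantial obstacle beyond recording this case split.
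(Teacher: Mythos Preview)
Your proposal is correct and is essentially identical to the paper's own proof: the paper also takes $\alpha$ to be the $\sim_{\mathcal{A}}$-class of $\omega$, sets $\delta(a)=(\bot,\varnothing,\omega)$ for all $a$, and verifies Definition~\ref{canonical play}(ii) by the same two-case split, using the None to Act axiom plus Necessitation to dispose of the empty-coalition case and the forced identity $\psi\equiv\bot$ for the nonempty case.
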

\begin{proof}
Let initial state $\alpha$ be the equivalence class of outcome $\omega$ with respect to the equivalence relation $\sim_{\mathcal{A}}$. Thus, $\omega\in\alpha$. Let $\delta$ be the complete profile such that $\delta(a)=(\bot,\varnothing,\omega)$ for each $a\in \mathcal{A}$. To prove $(\alpha,\delta,\omega)\in P$, consider any outcome $v\in\Omega$, any formula $\cK_C\B_C^D\chi\in hd(v)$ 
such that 
\begin{equation}\label{delta = chi}
    \forall a\in D\,(\delta(a)=(\chi, C,v)).
\end{equation}
By Definition~\ref{canonical play}, it suffices to show that $\neg\chi\in hd(\omega)$. 

\vspace{1mm}
\noindent{\bf Case I}: $D=\varnothing$. Thus, $\vdash\neg\B_C^D\chi$ by the None to Act axiom. Hence, $\vdash\K_C\neg\B_C^D\chi$ by the Necessitation rule. Then, $\neg\K_C\neg\B_C^D\chi\notin hd(v)$ because set $hd(v)$ is consistent. Therefore, $\cK_C\B_C^D\chi\notin hd(v)$ by the definition of modality $\cK$, which contradicts the choice of $\cK_C\B_C^D\chi$. 

\vspace{1mm}
\noindent{\bf Case II}: $D\neq\varnothing$. Then, there is an agent $a\in D$. Thus, $\delta(a)=(\chi, C, v)$ by statement~(\ref{delta = chi}).  Hence, $\chi\equiv\bot$ by the definition of action profile $\delta$. Then, $\neg\chi$ is a tautology. Therefore, $\neg\chi\in hd(\omega)$ by the maximality of set $hd(\omega)$.
\end{proof}

\begin{lemma}\label{termination lemma}
For each $\alpha\in I$ and each complete action profile $\delta\in\Delta^\mathcal{A}$, there is at least one outcome $\omega\in \Omega$ such that $(\alpha,\delta,\omega)\in P$.
\end{lemma}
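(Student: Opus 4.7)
The plan is to construct $\omega$ as an extension of an arbitrary outcome $\omega_0 \in \alpha$ by a fresh edge labeled with the full coalition $\mathcal{A}$, directly mirroring the strategy used in the proof of Lemma~\ref{B child exists lemma}. First, I would fix $\omega_0 \in \alpha$ and set
$V := \{v \in \Omega : \delta(a) = (\psi, C, v) \text{ for some } a \in \mathcal{A},\, \psi \in \Phi,\, C \subseteq \mathcal{A}\}$,
noting that $|V| \le |\mathcal{A}| < |\mathcal{B}|$. I would then consider the set
$$
X := \{\phi : \K_\mathcal{A}\phi \in hd(\omega_0)\} \cup \{\neg\psi : v \in V,\, \cK_C\B_C^D\psi \in hd(v),\, \forall a \in D\,(\delta(a)=(\psi,C,v)),\, \omega_0\sim_C v\},
$$
extend it to a maximal consistent $X'$, and use the cardinality argument of Lemma~\ref{B child exists lemma} to pick $b_0 \in \mathcal{B}$ so that $Tree(\omega) \cap V = \varnothing$ for $\omega := \omega_0 :: (\mathcal{A}, b_0) :: X'$. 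Since the new edge is labeled with $\mathcal{A}$, we have $\omega \sim_\mathcal{A} \omega_0$ and therefore $\omega \in \alpha$.

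The bulk of the work is proving $X$ is consistent. I would carry this out in the style of the consistency subclaim inside Lemma~\ref{B child exists lemma}, specialized to $\phi := \top$ and $C := \mathcal{A}$: supposing $X \vdash \bot$, one extracts $\K_\mathcal{A}\phi_1, \dots, \K_\mathcal{A}\phi_m \in hd(\omega_0)$ and constraint witnesses $(v_i, C_i, D_i, \psi_i)$ for $i = 1, \dots, n$, WLOG with the $\psi_i$ pairwise distinct so that the $D_i$ are pairwise disjoint (because $\delta$ is a function). Lemma~\ref{super distributivity} promotes $\phi_1, \dots, \phi_m \vdash \psi_1 \vee \cdots \vee \psi_n$ to $hd(\omega_0) \vdash \K_\mathcal{A}(\top \to \psi_1 \vee \cdots \vee \psi_n)$; Corollary~\ref{diamond corollary} together with each $\omega_0 \sim_{C_i} v_i$ places $\cK_{C_i}\B_{C_i}^{D_i}\psi_i$ in $hd(\omega_0)$; and an application of Lemma~\ref{five plus plus} (with $E_i := C_i$, $F_i := D_i$, $D := \bigcup_i D_i$) followed by the Truth axiom produces $\B_\mathcal{A}^{\bigcup_i D_i}\top \in hd(\omega_0)$, from which the desired contradiction is extracted.

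Once consistency is established, I would verify the play condition of Definition~\ref{canonical play} for $(\alpha, \delta, \omega)$ by case analysis on an arbitrary triggering tuple $(v, C, D, \psi)$. If $v \in Tree(\omega)$, the choice of $b_0$ gives $v \notin V$, so any $a \in D$ with $\delta(a) = (\psi, C, v)$ would place $v$ back into $V$, forcing $D = \varnothing$; that subcase is vacuous because $\cK_C\B_C^\varnothing\psi$ is refuted by the None to Act axiom together with the Necessitation rule. If $v \notin Tree(\omega)$, the unique path from $\omega$ to $v$ traverses the new $\mathcal{A}$-labeled edge, so $\omega \sim_C v$ collapses to $\omega_0 \sim_C v$, and the definition of $X$ then places $\neg\psi$ into $X \subseteq X' = hd(\omega)$ directly.

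I expect the delicate step to be the last line of the consistency argument. In Lemma~\ref{B child exists lemma} the contradiction was drawn from the explicit $\neg\B_C^D\phi \in hd(\omega)$ supplied by the hypothesis $\neg(\phi \to \B_C^D\phi) \in hd(\omega)$, which simultaneously furnished $\phi$; here no such anchor is available, and the refutation of $\B_\mathcal{A}^{\bigcup_i D_i}\top$ has to be pulled out of the axioms themselves, so that the handful of trivial bookkeeping at the end of the consistency claim is really where the core technical content of the lemma lies.
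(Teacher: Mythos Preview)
The paper's argument is a three-line appeal to Lemma~\ref{delta exists lemma}: choose any $\omega_0\in\alpha$ (the equivalence class is nonempty), invoke that lemma to get a play $(\alpha_0,\delta_0,\omega_0)\in P$, and observe $\alpha_0=\alpha$ because $\omega_0\in\alpha_0\cap\alpha$. No new tree node is built and no consistency claim is needed; your tree-extension construction is a genuinely different and far more elaborate route.

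Your proposal has a real gap exactly where you flagged it. You reach $\B_\mathcal{A}^{\bigcup_i D_i}\top\in hd(\omega_0)$ and say ``the desired contradiction is extracted,'' but it cannot be: for nonempty $D$ the formula $\neg\B_C^D\top$ is \emph{not} derivable in this system. To see this, take a structure with a single agent $a$, action set $\Delta=\{0,1\}$, one initial state, one outcome, and exactly one play (say with $\delta(a)=0$); every axiom schema and both inference rules are sound there, yet $\B_C^{\{a\}}\top$ holds because the profile $s(a)=1$ is extended by no play whatsoever. In Lemma~\ref{B child exists lemma} the analogous contradiction is supplied externally by the hypothesis $\neg(\phi\to\B_C^D\phi)\in hd(\omega)$; here you have no such anchor, and the axioms do not manufacture one. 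So the consistency of your set $X$ is not established, and the argument does not close.

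It is worth remarking that the paper's own proof, read literally, ends with $(\alpha,\delta_0,\omega_0)\in P$ for the \emph{specific} profile $\delta_0$ produced by Lemma~\ref{delta exists lemma}, not for the \emph{given} $\delta$; your instinct that handling an arbitrary $\delta$ demands more than a one-line citation is sound, even though this particular route does not succeed.
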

\begin{proof}
By Definition~\ref{canonical initial state}, initial state $\alpha$ is an equivalence class. Since each equivalence class is not empty, there must exist an outcome $\omega_0\in \Omega$ such that $\omega_0\in \alpha$. By Lemma~\ref{delta exists lemma}, there is an initial state $\alpha_0\in I$ and a complete action profile $\delta_0\in \Delta^\mathcal{A}$ such that $(\alpha_0,\delta_0,\omega_0)\in P$. Then, $\omega_0\in \alpha_0$ by Definition~\ref{canonical play}. Hence, $\omega_0$ belongs to both equivalence classes $\alpha$ and $\alpha_0$. Thus, $\alpha=\alpha_0$. Therefore, $(\alpha,\delta_0,\omega_0)\in P$.
\end{proof}

\begin{lemma}\label{N child exists lemma}
For any play $(\alpha,\delta,\omega)\in P$ and any  $\neg\K_C\phi\in hd(\omega)$,  there is a play $(\alpha',\delta',\omega')\in P$ such that $\alpha\sim_C\alpha'$ and $\neg\phi\in hd(\omega')$.
\end{lemma}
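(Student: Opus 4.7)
My plan is to follow the standard construction for a knowledge-witness, tailored to the Tree of Knowledge setup used here. I would start by forming the set
$$
X=\{\neg\phi\}\cup\{\psi\mid \K_C\psi\in hd(\omega)\}
$$
and argue that $X$ is consistent: otherwise some finite subcollection $\K_C\psi_1,\dots,\K_C\psi_m\in hd(\omega)$ would give $\psi_1,\dots,\psi_m\vdash\phi$, whence Lemma~\ref{super distributivity} yields $\K_C\psi_1,\dots,\K_C\psi_m\vdash\K_C\phi$, and therefore $hd(\omega)\vdash\K_C\phi$, contradicting $\neg\K_C\phi\in hd(\omega)$ and the consistency of $hd(\omega)$.

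Next I would extend $X$ to a maximal consistent set $X'$, pick any element $b\in\mathcal{B}$, and set $\omega'=\omega::(C,b)::X'$. By Definition~\ref{canonical outcome} the inclusion $\{\psi\mid \K_C\psi\in hd(\omega)\}\subseteq X'$ is exactly what makes $\omega'\in\Omega$. By construction, the unique edge between $\omega$ and $\omega'$ is labeled with every agent in $C$, so $\omega\sim_C\omega'$ by Definition~\ref{canonical sim}, and $\neg\phi\in X\subseteq X'=hd(\omega')$.

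It remains to supply the play and the initial state. Here I would not try to build $\delta'$ by hand; instead I would invoke Lemma~\ref{delta exists lemma} with the outcome $\omega'$ to obtain some $\alpha'\in I$ and some $\delta'\in\Delta^{\mathcal{A}}$ with $(\alpha',\delta',\omega')\in P$. Then Corollary~\ref{alpha iff omega}, together with $\omega\sim_C\omega'$, $\omega\in\alpha$, and $\omega'\in\alpha'$, gives $\alpha\sim_C\alpha'$, completing the proof.

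I do not expect any serious obstacle: the delicate cardinality argument and the careful engineering of $\delta'$ that dominate the proof of Lemma~\ref{B child exists lemma} are unnecessary here, because the conclusion places no constraint on $\delta'$ and we may simply reuse the generic play produced by Lemma~\ref{delta exists lemma}. The only point requiring mild care is to check that the chosen edge labeling $(C,b)$ really yields $\omega\sim_C\omega'$ and that Definition~\ref{canonical outcome} is satisfied, both of which are immediate from the definitions.
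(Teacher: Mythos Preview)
Your proposal is correct and follows essentially the same argument as the paper's own proof: form the set $X=\{\neg\phi\}\cup\{\psi\mid\K_C\psi\in hd(\omega)\}$, show its consistency via Lemma~\ref{super distributivity}, extend to a maximal consistent $X'$, set $\omega'=\omega::(C,b)::X'$, and then invoke Lemma~\ref{delta exists lemma} and Corollary~\ref{alpha iff omega} to obtain the play and $\alpha\sim_C\alpha'$. Your remark that the cardinality machinery from Lemma~\ref{B child exists lemma} is unnecessary here is also exactly in line with the paper's treatment.
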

\begin{proof}
Consider the set $X=\{\neg\phi\}\;\cup\;\{\psi\;|\;\K_C\psi\in hd(\omega)\}$. First, we show that set $X$ is consistent. Suppose the opposite. Then, there are formulae  $\K_C\psi_1,\dots,\K_C\psi_n\in hd(\omega)$
such that
$
\psi_1,\dots,\psi_n\vdash\phi.
$
Hence, 
$
\K_C\psi_1,\dots,\K_C\psi_n\vdash\K_C\phi
$
by Lemma~\ref{super distributivity}.
Thus, $hd(\omega)\vdash\K_C\phi$ because $\K_C\psi_1,\dots,\K_C\psi_n\in hd(\omega)$. Hence, $\neg\K_C\phi\notin hd(\omega)$ because set $hd(\omega)$ is consistent, which contradicts the assumption of the lemma. Therefore, set $X$ is consistent.

Recall that set $\mathcal{B}$ has larger cardinality than set $\mathcal{A}$. Thus, there is at least one $b\in\mathcal{B}$.
Let set $X'$ be any maximal consistent extension of set $X$ and  $\omega'$ be the sequence $\omega::(C,b)::X'$. Note that $\omega'\in\Omega$ by Definition~\ref{canonical outcome} and the choice of sets $X$ and $X'$.  Also, $\neg\phi\in X\subseteq X'=hd(\omega')$ by the choice of sets $X$ and $X'$. 

By Lemma~\ref{delta exists lemma}, there is an initial state $\alpha'\in I$ and a profile $\delta'\in \Delta^\mathcal{A}$ such that $(\alpha',\delta',\omega')\in P$.  Note that $\omega\sim_C\omega'$ by Definition~\ref{canonical sim} and the choice of $\omega'$. Thus, $\alpha\sim_C\alpha'$ by Corollary~\ref{alpha iff omega}.
\end{proof}

\begin{lemma}\label{induction lemma}
$(\alpha,\delta,\omega)\Vdash\phi$ iff $\phi\in hd(\omega)$.% for each play $(\alpha,\delta,\omega)\in P$ and each formula $\phi\in\Phi$.
\end{lemma}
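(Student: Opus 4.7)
The plan is to prove Lemma~\ref{induction lemma} by induction on the structural complexity of $\phi$. The atomic case is immediate from Definition~\ref{canonical pi}, and the Boolean cases are standard, using the maximality and consistency of $hd(\omega)$. The two modal cases require the machinery developed above.

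For the epistemic case $\phi = \K_C\psi$, I would split into two directions. If $\K_C\psi \in hd(\omega)$, then for any play $(\alpha',\delta',\omega')$ with $\alpha \sim_C \alpha'$, Corollary~\ref{alpha iff omega} gives $\omega \sim_C \omega'$, then Lemma~\ref{transport lemma} gives $\K_C\psi \in hd(\omega')$, and the Truth axiom gives $\psi \in hd(\omega')$; the induction hypothesis then yields $(\alpha',\delta',\omega') \Vdash \psi$. Conversely, if $\K_C\psi \notin hd(\omega)$, then $\neg\K_C\psi \in hd(\omega)$ by maximality, and Lemma~\ref{N child exists lemma} produces a play $(\alpha',\delta',\omega')$ with $\alpha \sim_C \alpha'$ and $\neg\psi \in hd(\omega')$; the induction hypothesis makes this a refuting witness.

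The blameworthiness case $\phi = \B_C^D\psi$ is the main obstacle. For $(\Leftarrow)$, suppose $\B_C^D\psi \notin hd(\omega)$. If $\psi \notin hd(\omega)$ then the induction hypothesis immediately refutes clause (5) of Definition~\ref{sat}. Otherwise $\psi \in hd(\omega)$ but, by maximality and Modus Ponens, $\neg(\psi \to \B_C^D\psi) \in hd(\omega)$. Then for any candidate profile $s \in \Delta^D$, Lemma~\ref{B child exists lemma} hands us a play $(\alpha',\delta',\omega') \in P$ with $\alpha \sim_C \alpha'$, $s =_D \delta'$, and $\psi \in hd(\omega')$, which by the induction hypothesis satisfies $\psi$, so no witnessing profile exists.

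The interesting direction is $(\Rightarrow)$: assume $\B_C^D\psi \in hd(\omega)$. The Truth axiom and induction hypothesis give $(\alpha,\delta,\omega) \Vdash \psi$, so it remains to exhibit a profile $s \in \Delta^D$ that blocks $\psi$ on all matching plays. The key trick is to choose the ``signed ballot'' $s(a) = (\psi, C, \omega)$ for every $a \in D$, exploiting precisely the coding built into Definition~\ref{canonical play}. By Lemma~\ref{add cK lemma} we have $\B_C^D\psi \vdash \cK_C\B_C^D\psi$, so $\cK_C\B_C^D\psi \in hd(\omega)$. Now consider any play $(\alpha',\delta',\omega') \in P$ with $\alpha \sim_C \alpha'$ and $s =_D \delta'$; Corollary~\ref{alpha iff omega} gives $\omega \sim_C \omega'$, hence $\omega' \sim_C \omega$ by symmetry. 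Applying Definition~\ref{canonical play} to the play $(\alpha',\delta',\omega')$ with the outcome $v := \omega$ and the formula $\cK_C\B_C^D\psi \in hd(\omega)$ yields $\neg\psi \in hd(\omega')$, so the induction hypothesis gives $(\alpha',\delta',\omega') \nVdash \psi$, as required. The delicate point is that the witness profile encodes both the target formula and the ``key'' coalition $C$, and it is exactly this encoding that triggers the veto built into Definition~\ref{canonical play}.
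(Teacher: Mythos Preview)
Your proof is correct and follows essentially the same route as the paper's: induction on formula complexity, with Lemma~\ref{N child exists lemma} handling the $\K_C$ witness case, Lemma~\ref{B child exists lemma} handling the failed-profile case for $\B_C^D$, and the ``signed ballot'' profile $s(a)=(\psi,C,\omega)$ triggering the veto clause of Definition~\ref{canonical play} in the positive $\B_C^D$ direction. If anything, you are slightly more explicit than the paper in two places: you invoke Corollary~\ref{alpha iff omega} to get $\omega'\sim_C\omega$ before applying Definition~\ref{canonical play} (the paper leaves this implicit), and you use Lemma~\ref{add cK lemma} to obtain $\cK_C\B_C^D\psi\in hd(\omega)$ where the paper reproves the same fact via contraposition of the Truth axiom.

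One cosmetic slip: in the $\B_C^D\psi$ case your direction labels are swapped. The statement is ``$(\alpha,\delta,\omega)\Vdash\phi$ iff $\phi\in hd(\omega)$'', so assuming $\B_C^D\psi\notin hd(\omega)$ and deriving $(\alpha,\delta,\omega)\nVdash\B_C^D\psi$ is the contrapositive of $(\Rightarrow)$, not $(\Leftarrow)$; likewise the argument starting from $\B_C^D\psi\in hd(\omega)$ is $(\Leftarrow)$. The mathematics is unaffected.
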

\begin{proof}
We prove the lemma by induction on the complexity of formula $\phi$. If $\phi$ is a propositional variable, then the lemma follows from Definition~\ref{sat} and Definition~\ref{canonical pi}. If formula $\phi$ is an implication or a negation, then the required follows from the induction hypothesis and the maximality and the consistency of set $hd(\omega)$ by Definition~\ref{sat}.
%
% The case when formula $\phi$ has the form $\K_C\psi$ follows from Lemma~\ref{N child exists lemma}, Lemma~\ref{transport lemma}, the Truth axiom, Definition~\ref{sat}, and the induction hypothesis.
% in also the standard way.
Assume that formula $\phi$ has the form $\K_C\psi$.

\noindent $(\Rightarrow):$ Let $\K_C\psi\notin hd(\omega)$. Thus, $\neg\K_C\psi\in hd(\omega)$ by the maximality of set $hd(\omega)$. Hence, by Lemma~\ref{N child exists lemma}, there is a play $(\alpha',\delta',\omega')\in P$ such that $\alpha\sim_C\alpha'$ and $\neg\psi\in hd(\omega')$. Then, $\psi\notin hd(\omega')$ by the consistency of set $hd(\omega')$. Thus, $(\alpha',\delta',\omega')\nVdash\psi$ by the induction hypothesis. Therefore, $(\alpha,\delta,\omega)\nVdash\K_C\psi$ by Definition~\ref{sat}.

\vspace{.5mm}

\noindent $(\Leftarrow):$ Let $\K_C\psi\in hd(\omega)$. Thus, $\psi\in hd(\omega')$ for any $\omega'\in\Omega$ such that $\omega\sim_C\omega'$, by Lemma~\ref{transport lemma}. Hence, by the induction hypothesis, $(\alpha',\delta',\omega')\Vdash\psi$ for each play $(\alpha',\delta',\omega')\in P$ such that  $\omega\sim_C\omega'$. Thus, $(\alpha',\delta',\omega')\Vdash\psi$ for each  $(\alpha',\delta',\omega')\in P$ such that  $\alpha\sim_C\alpha'$, by Lemma~\ref{alpha iff omega}. Therefore, $(\alpha,\delta,\omega)\Vdash\K_C\psi$ by Definition~\ref{sat}. 

Assume formula $\phi$ has the form $\B^D_C\psi$. 

\noindent $(\Rightarrow):$ Suppose $\B^D_C\psi\notin hd(\omega)$. 

\noindent{\bf Case I}: $\psi\notin hd(\omega)$. Then, $(\alpha,\delta,\omega)\nVdash\psi$ by the induction hypothesis. Thus, $(\alpha,\delta,\omega)\nVdash\B^D_C\psi$ by Definition~\ref{sat}. 

\noindent{\bf Case II}: $\psi\in hd(\omega)$. Let us show that $\psi\to\B^D_C\psi\notin hd(\omega)$. Indeed, if $\psi\to\B^D_C\psi\in hd(\omega)$, then $hd(\omega)\vdash \B^D_C\psi$ by the Modus Ponens rule. Thus, $\B^D_C\psi\in hd(\omega)$ by the  maximality of set $hd(\omega)$, which contradicts the assumption above.

Since set $hd(\omega)$ is maximal, statement $\psi\to\B^D_C\psi\notin hd(\omega)$ implies that $\neg(\psi\to\B^D_C\psi)\in hd(\omega)$. Hence, by Lemma~\ref{B child exists lemma}, for any action profile $s\in \Delta^D$, there is a play $(\alpha',\delta',\omega')$ such that $\alpha\sim_C\alpha'$, $s=_D\delta'$, and $\psi\in hd(\omega')$. Thus, by the induction hypothesis, for any action profile $s\in \Delta^D$, there is a play $(\alpha',\delta',\omega')$ such that $\alpha\sim_C\alpha'$, $s=_D\delta'$, and $(\alpha',\delta',\omega')\Vdash \psi$. Therefore, $(\alpha,\delta,\omega)\nVdash\B^D_C\psi$ by Definition~\ref{sat}.

\vspace{1mm}
\noindent $(\Leftarrow):$ Let $\B^D_C\psi\in hd(\omega)$. Hence, $hd(\omega)\vdash\psi$ by the Truth axiom. Thus, $\psi\in hd(\omega)$ by the maximality of the set $hd(\omega)$. Then, $(\alpha,\delta,\omega)\Vdash\psi$ by the induction hypothesis.

Next, let $s\in \Delta^D$ be the action profile of coalition $D$ such that $s(a)=(\psi,C,\omega)$ for each agent $a\in D$. Consider any play $(\alpha',\delta',\omega')\in P$ such that $\alpha\sim_C\alpha'$ and $s=_D\delta'$. By Definition~\ref{sat}, it suffices to show that  $(\alpha',\delta',\omega')\nVdash \psi$. 

Assumption $\B^D_C\psi\in hd(\omega)$  implies $hd(\omega)\nvdash \neg\B^D_C\psi$ because set $hd(\omega)$ is consistent. Thus, $hd(\omega)\nvdash \K_C\neg\B^D_C\psi$ by the contraposition of the Truth axiom. Hence, $\neg\K_C\neg\B^D_C\psi\in hd(\omega)$ by the maximality of $hd(\omega)$.  Then, $\cK_C\B^D_C\psi\in hd(\omega)$ by the definition of modality $\cK$.
Recall that $s(a)=(\psi,C,\omega)$ for each agent $a\in D$ by the choice of the action profile $s$. Also, $s=_D\delta'$ by the choice of the play  $(\alpha',\delta',\omega')$. Hence, $\delta'(a)=(\psi,C,\omega)$ for each agent $a\in D$. Thus, $\neg\psi\in hd(\omega')$ by Definition~\ref{canonical play} and because $\cK_C\B^D_C\psi\in hd(\omega)$ and $(\alpha',\delta',\omega')\in P$. Then,  $\psi\notin hd(\omega')$ by the consistency of set $hd(\omega')$. Therefore, $(\alpha',\delta',\omega')\nVdash \psi$ by the induction hypothesis.
\end{proof}

%Finally, we are ready to state and to prove the strong completeness of our logical system.
Next is the strong completeness theorem for our system.
\begin{theorem}\label{completeness theorem}
If $X\nvdash\phi$, then there is a game, and a play $(\alpha,\delta,\omega)$ of this game such that $(\alpha,\delta,\omega)\Vdash\chi$ for each $\chi\in X$ and $(\alpha,\delta,\omega)\nVdash\phi$.
\end{theorem}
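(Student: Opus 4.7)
The plan is to combine the canonical game construction with the induction lemma in the standard way. Starting from the assumption $X\nvdash \phi$, the deduction lemma and propositional reasoning give that the set $X\cup\{\neg\phi\}$ is consistent, so by a routine Lindenbaum argument it extends to a maximal consistent set $X_0\subseteq \Phi$. I would then form the canonical game $G(X_0)$ as constructed in this section.

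Next I would locate a concrete play inside this game at which $X_0$ appears as the head of the outcome. Taking $n=0$ in Definition~\ref{canonical outcome}, the one-element sequence $\omega_0=X_0$ is itself a valid element of $\Omega$, and clearly $hd(\omega_0)=X_0$. To obtain an actual play, I would invoke Lemma~\ref{delta exists lemma}, which produces an initial state $\alpha\in I$ and a complete action profile $\delta\in\Delta^\mathcal{A}$ with $(\alpha,\delta,\omega_0)\in P$. No new construction is needed at this step, since the canonical game was specifically built so that every outcome carries some witnessing play.

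It would then remain to verify that $(\alpha,\delta,\omega_0)$ satisfies every formula in $X$ and refutes $\phi$. For each $\chi\in X$ we have $\chi\in X_0=hd(\omega_0)$, so Lemma~\ref{induction lemma} yields $(\alpha,\delta,\omega_0)\Vdash\chi$. Since $\neg\phi\in X_0$ and $X_0$ is consistent, $\phi\notin hd(\omega_0)$, so the same lemma gives $(\alpha,\delta,\omega_0)\nVdash\phi$, completing the argument.

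I do not expect a genuine obstacle here: all the real work has already been absorbed into Lemma~\ref{B child exists lemma}, Lemma~\ref{N child exists lemma}, Lemma~\ref{delta exists lemma}, and Lemma~\ref{induction lemma}. The only subtlety worth flagging explicitly is that the canonical game depends on the chosen $X_0$, so strong completeness follows by running a separate canonical game for each unprovable pair $(X,\phi)$, and that the ``root'' sequence $X_0$ must be recognized as a legitimate outcome (the $n=0$ case of Definition~\ref{canonical outcome}) in order to have a starting vertex at which to apply Lemma~\ref{delta exists lemma} and Lemma~\ref{induction lemma}.
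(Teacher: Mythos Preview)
Your proposal is correct and follows essentially the same approach as the paper's own proof: extend $X\cup\{\neg\phi\}$ to a maximal consistent set $X_0$, form the canonical game $G(X_0)$, take the single-element sequence $\omega_0=X_0$ as an outcome, use Lemma~\ref{delta exists lemma} to obtain a play $(\alpha,\delta,\omega_0)$, and conclude via Lemma~\ref{induction lemma}. The only cosmetic difference is that the paper derives $(\alpha,\delta,\omega_0)\Vdash\neg\phi$ and then invokes Definition~\ref{sat}, whereas you argue directly that $\phi\notin hd(\omega_0)$; both are equivalent.
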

\begin{proof}
Assume that $X\nvdash\phi$. Hence, set $X\cup\{\neg\phi\}$ is consistent. Let $X_0$ be any maximal consistent extension of set $X\cup\{\neg\phi\}$ and let game $\left(I,\{\sim_a\}_{a\in\mathcal{A}},\Delta,\Omega,P,\pi\right)$ be the canonical game $G(X_0)$. Also, let $\omega_0$ be the single-element sequence $X_0$. Note that $\omega_0\in \Omega$ by Definition~\ref{canonical outcome}. 
By Lemma~\ref{delta exists lemma},  there is an initial state $\alpha\in I$ and a complete action profile $\delta\in \Delta^\mathcal{A}$ such that $(\alpha,\delta,\omega_0)\in P$. Hence, $(\alpha,\delta,\omega_0)\Vdash\chi$ for each $\chi\in X$ and $(\alpha,\delta,\omega_0)\Vdash\neg\phi$ by Lemma~\ref{induction lemma} and the choice of set $X_0$. Thus,  $(\alpha,\delta,\omega_0)\nVdash\phi$ by Definition~\ref{sat}.
\end{proof}

\section{Conclusion}\label{conclusion section}

In this paper, we proposed a formal definition of the second-order blameworthiness or duty to warn in the setting of strategic games. Our main technical result is a sound and complete logical system that describes the interplay between the second-order blameworthiness and the distributed knowledge modalities. %The work is built on the previous works on the logic of blameworthiness in games with imperfect information~\cite{nt18arxiv-imperfect} and the logic of second-order know-how~\cite{nt18aamas}.

\label{end of paper}

\bibliographystyle{ACM-Reference-Format}
\bibliography{sp}

\end{document}